

%

\documentclass[acmsmall,authorversion,nonacm]{acmart}
\settopmatter{printacmref=false}

\AtBeginDocument{%
  }

\setcopyright{acmlicensed}
\copyrightyear{2024}
\acmYear{2024}
\acmDOI{XXXXXXX.XXXXXXX}

\acmJournal{JACM}
\acmVolume{37}
\acmNumber{4}
\acmArticle{111}
\acmMonth{7}




\usepackage{graphicx} 
\usepackage{amsfonts} 
\usepackage{amsmath}
\usepackage{amsthm}
\usepackage{hyperref}

\usepackage{listings}
\usepackage{color}
\usepackage{subfigure}
\usepackage{tikz}
\usepackage{xcolor}
\usepackage{colortbl}
\usepackage{natbib}
\usepackage{booktabs}
\usepackage{tabularx}
\usepackage{mysymbol}
\usepackage{ifthen}
\usepackage{diagbox}
\usepackage{multirow}
\usepackage{pifont}
\newtheorem{assumption}{Assumption}
\newtheorem{problem}{Problem}
\newtheorem{remark}{Remark}
\lstset{language=Python,
        basicstyle=\ttfamily\small,
        commentstyle=\color{gray},
        keywordstyle=\color{blue},
        stringstyle=\color{red},
        showstringspaces=false,
        numbers=left,
        numberstyle=\tiny, 
        frame=single,
        columns=fullflexible,
        breaklines=true,
        postbreak=\mbox{\textcolor{red}{$\hookrightarrow$}\space}
       }

\usepackage{bm}

\newcommand{\md}[1]{\bbF_{\text{#1}}}
\newcommand{\gti}{\bbv_{k}}

\newcommand{\bhline}{\noalign{\hrule height 1.2pt}}

\newcommand{\modifycolor}{black}  
\newcommand{\mv}{\texttt{MV.jl}\ }

\newcommand{\circled}[1]{\tikz[baseline=(char.base)]{
            \node[shape=circle,draw,inner sep=0.8pt,font=\scriptsize] (char) {#1};}}
\graphicspath{{imgs/}}

\begin{document}

\title{Certifying Robustness of Learning-Based Keypoint Detection and Pose Estimation Methods}

\author{Xusheng Luo}
\email{xushengl@andrew.cmu.edu}
\orcid{0000-0003-4342-7234}
\author{Tianhao Wei}
\email{twei2@andrew.cmu.edu}
\orcid{0000-0003-2505-4585}
\author{Simin Liu}
\email{siminliu@andrew.cmu.edu}
\orcid{0009-0009-1495-6920}
\author{Ziwei Wang}
\email{ziweiwa2@andrew.cmu.edu}
\orcid{0000-0001-9225-8495}
\affiliation{%
  \institution{Carnegie Mellon University}
  \streetaddress{5000 Forbes Avenue}
  \city{Pittsburgh}
  \state{Pennsylvania}
  \country{USA}
  \postcode{15213}
}

\author{Luis Mattei-Mendez}
\email{Luis.E.Mattei-Mendez@boeing.com}
\author{Taylor Loper}
\email{taylor.s.loper@boeing.com}
\author{Joshua Neighbor}
\email{joshua.neighbor@boeing.com}
\affiliation{%
  \institution{The Boeing Company}
  \streetaddress{PO Box 3707}
  \city{Seattle}
  \state{Washington}
  \country{USA}
  \postcode{98214-2207}
}

\author{Casidhe Hutchison}
\email{fhutchin@nrec.ri.cmu.edu}
\author{Changliu Liu}
\email{cliu6@andrew.cmu.edu}
\orcid{0000-0002-3767-5517}
\affiliation{%
  \institution{Carnegie Mellon University}
  \streetaddress{5000 Forbes Avenue}
  \city{Pittsburgh}
  \state{Pennsylvania}
  \country{USA}
  \postcode{15213}
}

\renewcommand{\shortauthors}{Luo et al.}

\begin{abstract}
  This work addresses the certification of the local robustness of vision-based two-stage 6D object pose estimation. The two-stage method for object pose estimation achieves superior accuracy over the single-stage approach by first employing deep neural network-driven keypoint regression and then applying a Perspective-n-Point (PnP) technique.
    Despite advancements, the certification of these methods' robustness, especially in safety-critical scenarios, remains scarce. This research aims to fill this gap with a focus on their local robustness on the system level—the capacity to maintain robust estimations amidst semantic input perturbations. The core idea is to transform the certification of local robustness into a process of neural network verification for classification tasks. The challenge is to develop model, input, and output specifications that align with off-the-shelf  verification tools. To facilitate verification, we modify the keypoint detection model by substituting nonlinear operations with those more amenable to the verification processes. Instead of merely injecting random noise into images, as is common, we employ a convex hull representation of images as input specifications to more accurately depict semantic perturbations. Furthermore, by conducting a sensitivity analysis, we propagate the robustness criteria from pose estimation to keypoint accuracy, and then formulating an optimal error threshold allocation problem that allows for the setting of a maximally permissible keypoint deviation thresholds. Viewing each pixel as an individual class, these thresholds result in linear, classification-akin output specifications. Under certain conditions, we demonstrate that the main components of our certification framework are both sound and complete, and validate its effects through extensive evaluations on realistic perturbations. To our knowledge, this is the first study to certify the robustness of large-scale, keypoint-based pose estimation given images in real-world scenarios. 
\end{abstract}

\begin{CCSXML}
<ccs2012>
   <concept>
       <concept_id>10010520.10010553</concept_id>
       <concept_desc>Computer systems organization~Embedded and cyber-physical systems</concept_desc>
       <concept_significance>500</concept_significance>
       </concept>
   <concept>
       <concept_id>10010147.10010178.10010224.10010245</concept_id>
       <concept_desc>Computing methodologies~Computer vision problems</concept_desc>
       <concept_significance>300</concept_significance>
       </concept>
 </ccs2012>
\end{CCSXML}

\ccsdesc[500]{Computer systems organization~Embedded and cyber-physical systems}
\ccsdesc[300]{Computing methodologies~Computer vision problems}
\keywords{Neural networks verification, robust pose estimation, keypoint detection}

\maketitle

\section{Introduction}
In the realm of computer vision, vision-based 6D object pose estimation, i.e., 3D rotation and 3D translation of an object with respect to the camera,  serves as a pivotal method for identifying, monitoring, and interpreting the posture and movements of objects through images~\cite{fan2022deep,thalhammer2023challenges}
This technology is fundamental in granting machines the ability to comprehend the physical environment, finding its utility in diverse domains such as robotics~\cite{collet2011moped,deng2020self}, augmented reality~\cite{tang20193d}, and human-computer interaction~\cite{zheng2023deep}. The evolution of deep learning and the adoption of neural networks, particularly convolutional neural networks (CNNs), have markedly surpassed traditional techniques that depend on manually engineered features. Within the spectrum of learning-based approaches, a distinct classification exists: single-stage methods directly estimate the 6D pose from an image~\cite{kehl2017ssd,di2021so,zheng2023hs}. Conversely, a more widespread and accurate category of methods employs a two-stage strategy, initially regressing sparse keypoints~\cite{rad2017bb8,oberweger2018making,he2021ffb6d} or dense pixels~\cite{wang2019normalized,park2019pix2pose,peng2019pvnet,shugurov2021dpodv2,lian2023checkerpose} from the image, followed by the utilization of a Perspective-n-Point (PnP)-based strategy for pose estimation through established 3D-2D point correspondences.

Despite the increasing efforts to boost the empirical robustness of these methods against challenges like occlusions, fluctuating lighting, and varied backgrounds, the focus on validating or certifying the reliability of vision-based pose estimation systems remains minimal. The absence of performance assurances for these frameworks raises concerns about their integration into safety-critical applications. In this work, our objective is to certify the robustness of learning-based keypoint detection and pose estimation approaches given input images. We focus on the aspect of local robustness, which refers to the ability to maintain consistent performance or predictions when the input data is perturbed around a given input point. The core question is determining whether pose estimation stays within an acceptable range when the input image undergoes perturbations. To the best of our knowledge, this study is the first one to certify the robustness of large-scale, keypoint-based pose estimation problem encountered in the real world.

Given the crucial role of neural networks in learning-based visual pose estimation, certifying their robustness is inherently linked to neural network verification~\cite{liu2021algorithms}. This area of verification has attracted significant attention in recent years, driven by the paradox of widespread adoption of neural network solutions without adequate assurance of their reliability, primarily due to their opaque nature~\cite{brix2023fourth}. What distinguishes our problem from existing neural network (NN) verification efforts is our focus on system-level properties rather than on verifying attributes of isolated neural networks, which typically relate directly to the NN outputs~\cite{katz2019marabou,bak2020improved,wang2021beta}. For example, in robustness verification of classification models, the objective is to ensure the predicted class remains unchanged despite variations in input. Our research, however, targets the pose estimation framework, wherein the NN constitutes only one component of the entire system. The verification encompasses system-wide requirements, necessitating not just the evaluation of the keypoint detection model's robustness but also that of the PnP-based method.

\begin{figure}
    \centering
    \includegraphics[width=1\linewidth]{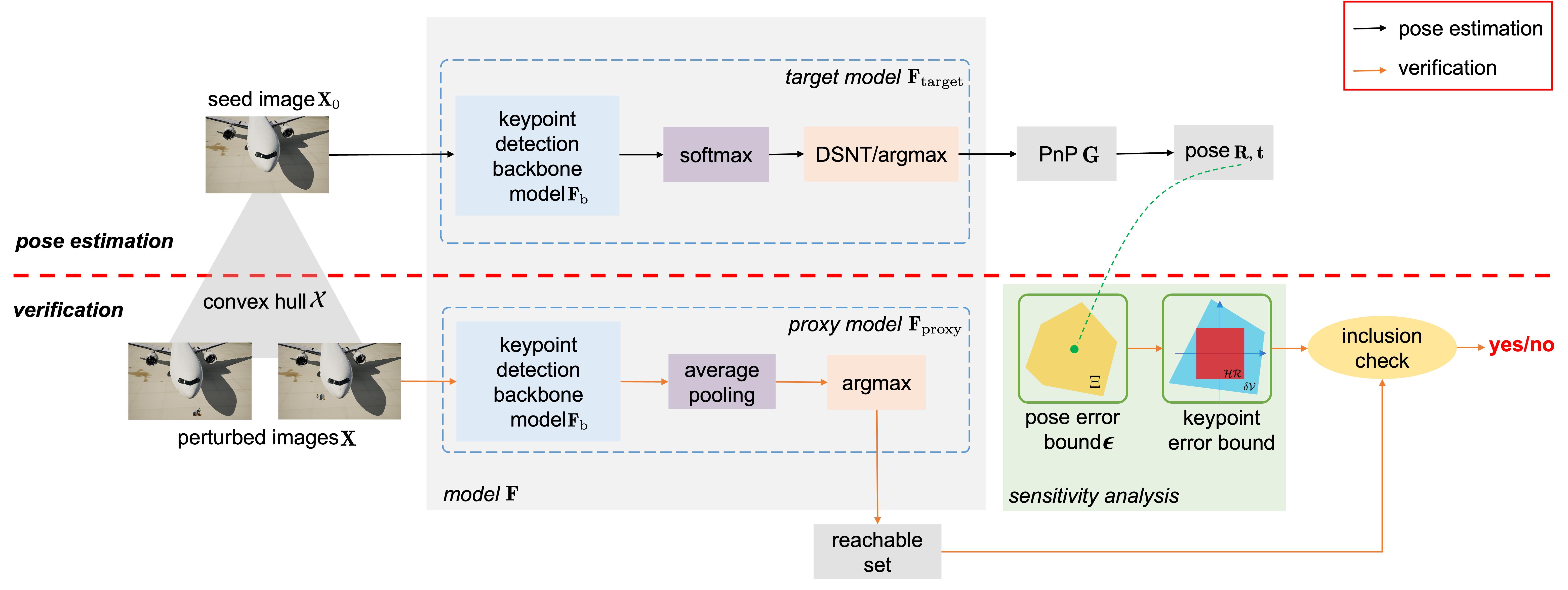}
    \caption{{\color{\modifycolor}Overview of the PnP-based pose estimation and the proposed verification framework. A  thick {\color{red}red} dashed line divides the sections for pose estimation (above) and verification (below). For pose estimation, a seed image $\bbX_0$ is processed by the target model $\bbF_{\text{target}}$ to identify keypoints, which are then input into the PnP method $\bbG$ to determine the pose $\bbR$ and $\bbt$. The verification framework takes as input the seed image $\bbX_0$ and a set of perturbed images $\bbX$ that form the convex hull $\ccalX$, along with the pose error bound $\bm{\epsilon}$. Through sensitivity analysis, this pose error bound is transformed into a keypoint error bound, which in turn determines the parameters of the average pooling operation. This substitution replaces the less verification-friendly softmax operation, creating the proxy model $\bbF_{\text{proxy}}$. 
    By checking the inclusion relation between the reachable set of model $\bbF_{\text{proxy}}$ and the output specification, the verification tool returns whether the model is robustness.
    }}
    \label{fig:overview}
\end{figure}

\subsection{Overview of the Approach}
To certify the robustness of the two-stage keypoint-based pose estimation framework, our main idea is to convert the local robustness certification of pose estimation into a standard neural network verification problem for classification networks. The primary challenge involves crafting three verification components—model, input, and output specifications—in a manner that is compatible with existing verification tools. {\color{\modifycolor}A graphical overview is provided in Fig.~\ref{fig:overview}. }
\paragraph{Model modification for verification.} Considering the keypoint detection model involves complex nonlinear operations such as the softmax function, a variant of the model that is more amenable to verification is created, and an analysis is conducted to understand the properties that are maintained between the original and this modified model. 
\paragraph{Input specification through convex hulls.} To account for realistic semantic perturbations in input images, we define the input space as a convex hull of possible perturbed images, which captures variations in a mathematically rigorous manner, allowing for a linear representation of input perturbations. Such a specification outperforms existing methods that simply introduce random independent noise into images.
\paragraph{Output specification via sensitivity analysis.} The core of connecting system requirements with the neural network's output lies in conducting a sensitivity analysis of the downstream PnP method. By understanding how variations in detected keypoints affect the estimated pose, it's possible to translate system-level pose accuracy requirements into error thresholds for keypoint detection. By treating each pixel as a separate class, these thresholds are then used to define classification-like linear output specifications.

\subsection{Contributions}
Our contributions can be summarized as follows:
\begin{enumerate}
    \item We propose a local robustness certification framework for the learning-based keypoint detection and pose estimation pipeline;
    \item We analyse the soundness and completeness properties of this certification framework;
    \item We demonstrate the method's efficacy through validation on a real-world scale keypoint-based pose estimation problem.
\end{enumerate}

\section{Related Work}

\subsection{Formal Verification of Neural Networks}
The objective of verifying neural networks involves ensuring they meet certain standards of safety, security, accuracy, or robustness. This essentially means determining the truth of a specific claim about the outputs of a network based on its inputs. In recent years, there has been a significant influx of research in this area. For comprehensive insights into neural network verification, one can refer to~\cite{liu2021algorithms}. Verification techniques are generally divided into three main groups: reachability-based approaches, which perform a layer-by-layer analysis to assess network output range~\cite{gehr2018ai2,xiang2018output,tran2020nnv}; optimization methods, which seek to disprove the assertion~\cite{bastani2016measuring,tjeng2018evaluating}; and search-based strategies which combine with reachability analysis or optimization to identify instances that contradict the assertion~\cite{katz2019marabou,xu2020fast,wu2024marabou,duong2024harnessing}. In 2020, VNN-COMP~\cite{brix2023fourth} launched as a competition to evaluate the capabilities of advanced verification tools spanning a variety of tasks, including collision detection, image classification, dataset indexing, and image generation. However, these methods treat deep neural networks in isolation, concentrating on analyzing the input-output relationship.

Concurrently, there is research focused on the system-level safety of cyber-physical systems (CPS) incorporating neural network components, particularly within the system and controls domain. They broadly fall into two categories. The first category~\cite{tran2019safety,dutta2019reachability,everett2021neural,ivanov2021verisig} focuses on ensuring the correctness of neural network-based controllers, taking their input from the structured outcomes of the state estimation module, regardless of the state estimation module is based on perception or not. Neural network controllers of this type generally consist of several fully connected layers, making them relatively straightforward to verify. The second category focuses on validating the closed-loop performance of vision-based dynamic systems that incorporate learning-based components. Among these, studies such as~\cite{sun2019formal,ivanov2020case,ivanov2021compositional,hsieh2022verifying,sun2022formal} examine LiDARs as the perception module, processed by multi-linear perceptrons (MLPs) with a few hidden layers. Other approaches, primarily applied to runway landing and lane tracking, deal with high-dimensional inputs from camera images, employing methods like approximate abstraction of the perception model~\cite{hsieh2022verifying}, contract synthesis~\cite{astorga2023perception}, simplified networks within the perception model~\cite{cheng2020towards,katz2022verification}, or a domain-specific model of the image formation process~\cite{santa2022nnlander}. 
{\color{\modifycolor}Nevertheless, studies directly dealing with high-dimensional inputs from camera images are still limited due to the images' high dimensionality and unstructured data nature, in contrast to structured robot states such as position and velocity.}


\subsection{Certification of Keypoint Detection and Pose Estimation Methods}
The investigation of certification methods for pose estimation is relatively limited.~\cite{talak2023certifiable} introduced a certifiable approach to keypoint-based pose estimation from point clouds by correcting keypoints identified by the model, ensuring the correctness guarantee of the pose estimation.~\cite{shi2023correct} expanded on this by integrating the correction concept with ensemble self-training. In a similar vein, By propagating the uncertainty in the keypoints to the object pose,~\cite{yang2023object} created a keypoint-based pose estimator for point clouds that is provably correct and is characterized by definitive worst-case error bounds. The above work focuses on point clouds as opposed to images.~\cite{wursthorn2024uncertainty} applied an advanced deep learning technique for uncertainty quantification to assess the uncertainty (i.e., the predicted distribution of a pose) in multi-stage 6D object pose estimation methods. In contrast,~\cite{santa2023certified} aimed to design a neural network that processes camera images to directly predict the aircraft's position relative to the runway with certifiable error bounds. Of all these studies,~\cite{kouvaros2023verification} is the most similar to our work, which focuses on the verification of keypoint detection, excluding the examination of the PnP method for system-wide assurances. Their approach verifies the neural network in isolation and is limited to very slight perturbations, failing to encompass realistic semantic variations.

\section{Background}
In this work, we represent scalars and scalar functions by italicized lowercase letters ($x$), vectors and vector functions by upright bold lowercase letters ($\bbx$), matrices and matrix functions by upright bold uppercase letters ($\bbX$), and sets and set functions with calligraphic uppercase letters ($\ccalX$).

\subsection{Keypoint-based Pose Estimation}\label{sec:bg_pose}
The keypoint-based approach consists of two steps to estimate the 6D pose from a 2D model image. First, a neural network is tasked with predicting the 2D locations of keypoints, whose 3D locations are predefined within the object model. A common strategy involves the use of heatmap regression, wherein ground-truth heatmaps are created by placing 2D Gaussian kernels atop each keypoint. The heatmap pixel values are interpreted as the likelihood of each pixel being a keypoint. These heatmaps are then used to guide the training through an $\ell_2$ loss. The detection network can be divided into two parts. A \textit{backbone} network, denoted by $\md{b}$, inputs a 2D image to produce unnormalized heatmaps, which is first followed by a softmax layer that transforms unnormalized heatmaps into normalized ones, and then by argmax operations, or another layer of differentiable spatial to numerical transformation (DSNT)~\cite{nibali2018numerical}  for keypoint extraction. We refer to the part after the softmax (including) as the \textit{head} network. The entire network is represented by
$   \bbV =  \bbF(\bbX) = \md{h}  \circ \md{b}(\bbX)$, 
where $\bbX \in \mbR^{H\times W\times C}$ represents a 2D RGB image with dimensions being $H\times W\times C$, and $\bbV \in \mbR^{K\times 2}$ denotes the 2D coordinates of $K$ keypoints. Here, $\circ$ denotes function composition. To enhance accuracy and robustness, it's often essential to preprocess the input image $\bbX$ before it is passed to the network, such as resizing and color normalization. Denote this preprocessing step by $\bbF_0$, leading to the equation $  \bbV =  \bbF(\bbX) = \md{h}  \circ \md{b} \circ \bbF_0(\bbX)$.  In what follows, we omit the preprocessing step unless it is critical to consider it.

\begin{figure}[!t]
    \centering
    \includegraphics[width=\linewidth]{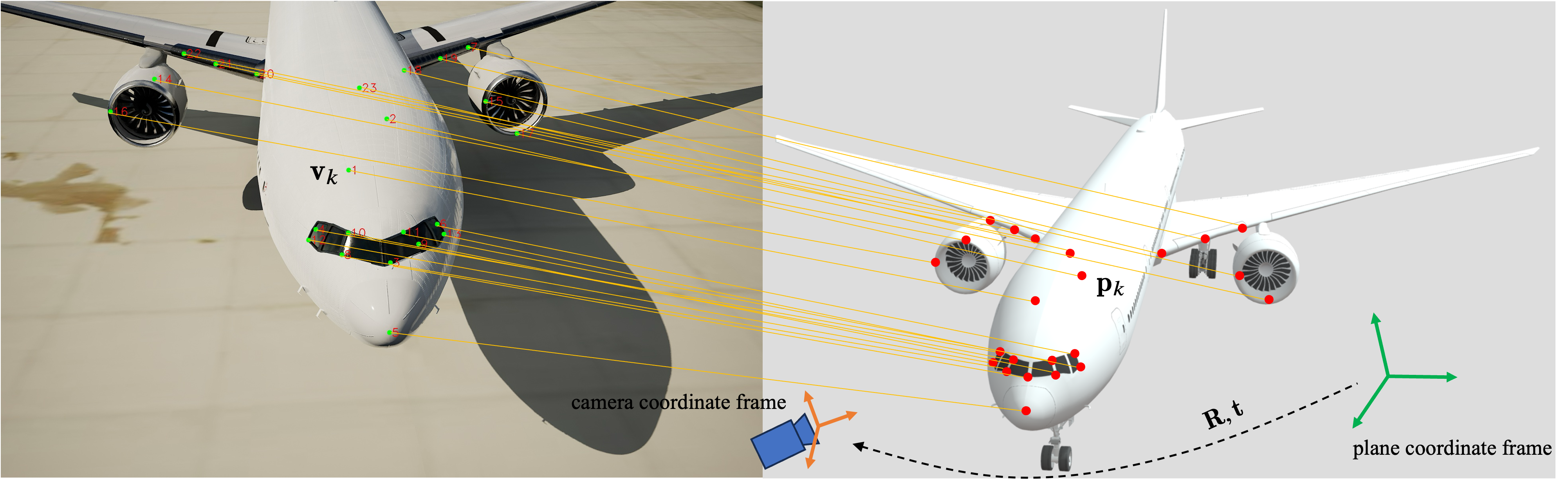}
    \caption{{\color{\modifycolor}Pose estimation of an airplane parked at airports is conducted using a PnP-based method. The method uses 23  keypoints, marked in {\color{red}red}, which are placed across the airplane's surface to thoroughly cover the aircraft's body, as shown in the 3D model from~\cite{hikami3150_2024} (right). These keypoints have predefined 3D coordinates within the airplane's coordinate system. An overhead image of the airplane is taken and 2D keypoints, marked in {\color{green}green}, are identified through a keypoint detection network. The PnP-based method computes the transformation matrix between the plane and camera coordinate frames.}}
    \label{fig:airplane}
    \vspace{-10pt}
\end{figure}

The second step employs the Perspective-n-Point (PnP) algorithm, which executes a nonlinear least squares (NLS) optimization to estimate the 6D pose from established 3D-to-2D correspondences. An illustration of the pose estimation  for an airplane is presented in Fig.~\ref{fig:airplane}. Let $\bbK \in \mbR^{3\times3}$ denote the camera intrinsic parameter matrix, $\bbp_k \in \mbR^3$ denote the 3D coordinate of keypoint $k$, where $k = 1, \ldots, K$, and $\bbv_k \in \mbR^2$ denote the corresponding 2D coordinate. These 3D-2D correspondences are formed through the following perspective projection model:
\begin{align}\label{eq:projection}
   \lambda_k \begin{bmatrix} 
    \bbv_k \\ 
    1
   \end{bmatrix}
   = \bbK (\bbR \bbp_k + \bbt),
\end{align}
where $\bbR \in \mbR^{3\times3}$ and $\bbt \in \mbR^{3}$ represent the rotation matrix and translation vector, respectively, that establish the transformation between the object and camera coordinate systems, with $\lambda_k$ representing the scaling factor. The objective of the PnP method is to approximate this transformation, represented by $\mathbf{\hhatR}$ and $\mathbf{\hhatt}$, by minimizing the $\ell_2$ norm of the reconstruction error:
\begin{equation}\label{eq:nls}
\begin{aligned}
\langle \mathbf{\hhatR}, \mathbf{\hhatt} \rangle\; =  \; \underset{\bbR, \bbt}{\argmin}
& & \sum_{k=1}^{K} \left\| \bbK (\bbR \bbp_k + \bbt) -   \lambda_k \begin{bmatrix} 
    \bbv_k \\ 
    1
   \end{bmatrix}\right\|^2_2.
\end{aligned}
\end{equation}

Let the notation $\langle \mathbf{\hhatR}, \mathbf{\hhatt} \rangle = \bbG(\bbP, \bbV)$ represent the PnP procedure, where $\bbP \in \mbR^{K\times 3}$ denotes the 3D coordinates of keypoints. Note that the keypoint-based pose estimation framework described above is the basic version. Numerous adaptations have been developed to increase accuracy and robustness~\cite{liu2024deep}, particularly in challenging conditions such as occlusions, varying viewpoints, and different lighting scenarios. This paper concentrates on validating the robustness of this foundational pipeline, marking a crucial step towards certifying the effectiveness of more intricate keypoint-based pose estimation techniques. Hereafter, we use $\bm{\Psi}$ to represent this pipeline, which includes keypoint detection followed by the application of a PnP method, that is, $\langle \mathbf{\hhatR}, \mathbf{\hhatt} \rangle = \bm{\Psi}(\bbX) = \bbG(\bbP, \bbF(\bbX))$.


\subsection{Verification of Neural Networks}
Consider a multi-layer neural network representing a function $\bbf$, which takes an input $\bbx \in  \ccalD_\bbx \subseteq \mbR^{d_0}$ and produces an output $\bby \in \ccalD_\bby \subseteq \mbR^{d_n}$, where $d_0$ is the input dimension, and $d_n$ is the output dimension. Any non-vector inputs or outputs are restructured into vector form. The verification process entails assessing the validity of the following input-output relationships defined by the function $\bbf$:
$
    \bbx \in \ccalX \Rightarrow \bby = \bbf(\bbx) \in \ccalY,
$
where sets, $\ccalX \subseteq \ccalD_\bbx $ and $\ccalY \subseteq \ccalD_\bby$, are referred to as input and output constraints, respectively. 

In the context of confirming the robustness of a classification network, the goal is to ascertain that all samples within a proximal vicinity of a specified input $x_0$ receive an identical classification label. Assuming the target label is $i^\ast \in \{1, \ldots, d_n\}$, the specification for verification is that $y_{i^\ast} > y_j$ for every $j$ not equal to $i^\ast$. The constraints on inputs and outputs are established accordingly:
$
\ccalX = \{\bbx \mid \|\bbx - \bbx_0\|_p \leq \epsilon\}, 
\ccalY = \{\bby \mid y_{i^\ast} > y_j, \; \forall j \neq i^\ast\},
$
where $\epsilon$ represents the maximum permissible deviation in the input space. The metric used to quantify disturbance can be any $\ell_p$ norm.

Neural network verification algorithms can generally be categorized into three main types: reachability analysis, optimization, and search. {\color{\modifycolor}NN verification essentially seeks to transform the nonlinear model checking problem into piece-wise linear satisfiability problems, and it can be applied to various nonlinearities, including ReLU and, more recently, softmax~\cite{pmlr-v206-wei23c}. Two pivotal attributes—\textit{soundness} and \textit{completeness}—are of critical importance. A verification algorithm is \textit{sound} if it only confirms the validity of a property when the property is indeed valid. It is \textit{complete} if it consistently recognizes and asserts the existence of a property whenever it is actually present. There is a trade-off between computational complexity and conservativeness (or in-completeness).}


\section{Problem Formulation}

The most common type of input specification involves limiting the $\ell_p$-norm of the variation to a threshold, that is, $\| \bbX - \bbX_0 \|_{p} \leq \epsilon$. {\color{\modifycolor}However, $\ell_p$ perturbation,  often characterized by a small value $\epsilon$,  is not a correct mathematical description of more realistic perturbations, as the independent nature of pixel-wise perturbations falls short in creating perturbations that reflect semantic correlations between pixels, such as large variations in lighting, weather conditions, and the effects of camera motion blur. Another approach for input perturbations is to directly add a generative model that perturbs the input image before the original neural network, such as~\cite{poursaeed2018generative}, and then verify them together. However, the verification result can be biased by the generative model used and it is not user friendly due to the difficulty in controlling the changes made to the image.} To address these shortcomings, we adopt a strategy based on the convex hull, which involves combining a seed image with a collection of perturbed images. These perturbed images can be created through different methods including simulators and learning-based generative models. Also, the convex hull specification can directly enable users to specify the perturbed images, which makes the perturbation specifications more user friendly.

\begin{definition}[Convex hull of images]
Given a seed image $\bbX_0$ and a set of $n$ perturbed images $\{\bbX_1, \ldots, \bbX_n\}$, the convex hull constituted by these images is defined by the set of all their possible convex combinations. Mathematically,
\begin{align}
 \ccalX = \left\{  \bbX \;\left\vert \; \bbX = \sum_{i = 0}^n \alpha_i \bbX_i, \quad \text{s.t.} \; \alpha_i \geq 0, \sum_{i=0}^n \alpha_i  = 1\right.\right\}.
\end{align}
\end{definition}
Images within the convex hull $\bbX \in \ccalX$ result from varying degrees of continuous blending among the provided images.  Through the convex combination of perturbation instances, we can model environmental and sensor-related perturbations, including changes in brightness, contrast, weather conditions, motion blur, and dust on lens. Convex hull perfectly captures the entire range of brightness or contrast shifts, and does a reasonable approximation of color shifts, provided they are small. However, this approach does not capture translational perturbations of the object or perturbations related to camera movements, such as changes in the viewpoint. Preliminary methods exist that characterize pixel-wise perturbations caused by camera motion~\cite{hu2023robustness}, but these methods cover only a very limited range. Investigating robustness against such perturbations will be addressed in future research.  A collection of perturbed images and sampled images from the convex hull can be found in Fig.~\ref{fig:airplanes} in Section~\ref{sec:verif_res}. 

{\color{\modifycolor}Given two rotation matrices $\bbR$ and $\mathbf{\hhatR}$, we define the function $\bbD_r (\mathbf{\hhatR}, \bbR) = \vert \bbr(\mathbf{\hhatR}) - \bbr(\bbR) \vert$, where $\bbr(\cdot)$ represents the function that converts rotations to Euler angles, and $\vert \cdot \vert$ denotes the element-wise absolute value. Thus, $\bbD_r$ measures the rotational difference along each axis. Similarly, define $\bbD_t \left(\mathbf{\hhatt}, \bbt \right) =  \left\vert  \mathbf{\hhatt} - \bbt \right \vert$, which calculates the translational difference per axis.}


\begin{problem}\label{prob:certification}
 {\color{\modifycolor}Given a convex hull representation $\ccalX$ consisting of a seed image $\bbX_0$ and $n$ perturbed images, and a keypoint-based pose estimation framework $\bm{\Psi}$.
 The problem is to certify whether the pose estimation framework $\bm{\Psi}$ is robust to any image within the set $\ccalX$.} Mathematically,
\begin{align}\label{eq:problem}
\begin{bmatrix}
    \bbD_r \left(\mathbf{\hhatR}, \bbR \right ) \\
    \bbD_t \left(\mathbf{\hhatt}, \bbt \right)
\end{bmatrix} \leq 
\begin{bmatrix}
    \bm{\epsilon}_r \\
    \bm{\epsilon}_t
\end{bmatrix}\quad \text{s.t.} \;  \langle \mathbf{\hhatR}, \mathbf{\hhatt} \rangle = \bm{\Psi}(\bbX), \;\forall\, \bbX \in \ccalX,
\end{align}
{\color{\modifycolor}where $\langle \bbR, \bbt \rangle =  \bbG(\bbP, \bbV)$ represents the pose estimated given the ground-truth 3D keypoint coordinates $\bbP$ and ground-truth 2D keypoint coordinates  $\bbV$ for $\bbX_0$,} and $\bm{\epsilon}_r \in \mbR^3$ and $\bm{\epsilon}_t \in \mbR^3$ denote the error thresholds specified by the user for rotation and translation. We refer to $\bbR$ and $\bbt$ as the \textit{nominal} transform, and $ \mathbf{\hhatR}$ and $ \mathbf{\hhatt}$ as the \textit{perturbed} transform. In simpler terms to describe Eq.~\eqref{eq:problem}, the variation between the nominal transform and the perturbed transform is kept within these predefined thresholds.
\end{problem}

\begin{remark}
If a pre-processing step, such as resizing or color normalization, is applied to an image before it is input into the keypoint detection model, then the convex hull should be constructed using images that have also undergone these pre-processing steps. Consequently, any image within the convex hull is represented as $\bbX = \sum_{i=0}^n \alpha_i \bbF_0(\bbX_i)$, where $\bbF_0$ denotes the pre-processing function. 
\end{remark}

 The vector of error thresholds $\bm{\epsilon_r}$ and $\bm{\epsilon_t}$ can specify requirements from system designers in an interpretable manner. For example, such requirement can be setting the error thresholds for translation to be within 1 meter. The difficulty presented by Problem~\ref{prob:certification} is that, the error thresholds $\bm{\epsilon}$ are defined with respect to pose estimation rather than the direct outputs of neural networks—keypoints—and there are nonlinear mappings between keypoints and pose estimation. 
To address Problem~\ref{prob:certification}, the core of our approach is to  transform the local robustness certification problem into a standard neural network verification problem for classification networks. This involves adapting three verification components—model, input, and output specifications—to be compatible with existing verification tools. As detailed in Section~\ref{sec:model}, we have modified the keypoint detection model to handle complex operations like the softmax function, simplifying the verification process. The input space is the a convex hull of possible perturbed images, which captures semantic variations  compared to traditional methods that introduce random noise. Section~\ref{sec:output_spec} outlines how the output specification is established through a sensitivity analysis of the PnP method, detailed further in Section~\ref{sec:error_prop_pooling}. This analysis helps translate system-level pose accuracy requirements into error thresholds for keypoint detection. Finally, Section~\ref{sec:error_analysis} is dedicated to analyzing the soundness and completeness of our proposed verification framework.


\section{Formulation of Verification of the Neural Network}

{\color{\modifycolor}To convert Problem~\ref{prob:certification} into an equivalent problem that can be verified using existing NN verification tools, we introduce approaches to modify the NN model and specify the output constraints.}

\subsection{Verification-Friendly Keypoint Detection Model}\label{sec:model}

As introduced in Section~\ref{sec:bg_pose}, a softmax layer follows the backbone network to generate probabilistic heatmaps. This step introduces verification challenges due to the highly nonlinear exponential function in softmax. To address this, we propose to maintain the backbone network but replace the head network with an alternative one. A critical insight is that during the inference phase, the unnormalized heatmaps output by the backbone network $\md{b}$ already provide significant clues about the locations of keypoints. The subsequent operations within the head network $\md{h}$, such as softmax and DSNT, essentially serve to aggregate this information across the entire heatmap. This aggregation allows for the generation of predictions in an average manner. In essence, the peaks within the heatmaps play a pivotal role in these predictions, and their locations remain unchanged between unnormalized and normalized heatmaps. Consequently, by focusing on the characteristics of these peaks, we can bypass some of the complexities introduced by the head network's nonlinear operations while still capturing the essential information required for accurate keypoint detection. 

By appending an average pooling layer followed by an argmax layer as the new head network, referred to as $\md{h}'$, to the backbone network, we create a \textit{proxy} model, denoted by $\md{proxy} = \md{h}' \circ \md{b}$. We refer to the original model as the \textit{target} model. The introduction of this proxy model simplifies the transformation of the verification problem into one akin to classification, where each pixel is treated as a unique category, as illustrated in Fig.~\ref{fig:overview}. The average pooling layer offers two advantages. Firstly, its downsampling effect significantly reduces the number of categories (pixels), thereby simplifying the complexity of verification. Secondly, by aggregating local features within each pooling region, it ensures a more accurate representation than applying argmax directly. 

The selection of pooling parameters should be guided by the error threshold on pose estimation $\bm{\epsilon}_r$ and $\bm{\epsilon}_t$, as elaborated in Section~\ref{sec:error_prop_pooling}. This leads to a scenario where the pooling parameters assigned to each heatmap, corresponding to individual keypoints, vary. This variation arises because keypoints have differing levels of influence on the accuracy of pose estimation. Consequently, keypoints with lesser influence on pose estimation are allocated a larger permissible error range, whereas those that play a critical role in pose estimation are subjected to stricter error ranges. It is worth noting that, with certain assumptions, the target model and proxy model are equivalent in terms of verification. The analysis on how assurances regarding the proxy model's performance can be extended to the target model is conducted in Section~\ref{sec:error_analysis}.

\subsection{Output Specification: Polytope Representation}\label{sec:output_spec}
For the coordinate $\bbv_k$ of the $k$-th keypoint, where $\bbv_k$ represents its 2D coordinate, let $\bar{\bbv}_k$ denote its \textit{averaged} ground-truth coordinate following the application of the average pooling and argmax layers. To ensure the classification result is consistent across perturbations, the output specification necessitates that the value at $\bar{v}_k$ exceeds the values at all other entries. This requirement can be encapsulated by a half-space-based polytope, represented as $\bbA_k \bby_k \leq \bbb_k$ with
{
\begin{align}
  \bbA_k = 
\begin{bmatrix}
    1 & 0  & \dots & -1 & \dots & 0 \\
    0 & 1  & \dots & -1 & \dots & 0 \\
    \vdots & \vdots & \ddots & \vdots & \vdots & \vdots \\
    0 & 0 & \dots & -1 & \dots & 0 \\
      \vdots & \vdots & \vdots & \vdots & \ddots & \vdots \\
          0 & 0 & \dots & -1 & \dots & 1 \\
\end{bmatrix}
, \quad
\bby_k = 
\begin{bmatrix}
    y_{1} \\
    y_{2} \\
    \vdots \\  
    y_{\bar{v}_k} \\
    \vdots \\
    y_{n}
\end{bmatrix}
, \quad
\bbb_k = 
\begin{bmatrix}
    0 \\
    0 \\
    \vdots \\
    M \\
    \vdots\\
    0
\end{bmatrix}, \label{eq:output}
\end{align}}%
where the $\bar{v}_k$-th column of $\bbA_k$ is populated with (-1)'s, and all diagonal elements set to 1 with the exception of the element at $(\bar{v}_k, \bar{v}_k)$, the components of $\bbb_k$ are set to zero except for the $\bar{v}_k$-th element, which is assigned a significantly large integer $M$. Consequently, the specification for all keypoints is denoted by $\bbA \bbx \leq \bbb$ with
\begin{align}\label{eq:output}
    \mathbf{A} = 
\begin{bmatrix}
    \mathbf{A}_1 & 0 & 0 & \dots & 0 \\
    0 & \mathbf{A}_2 & 0 & \dots & 0 \\
    0 & 0 & \mathbf{A}_3 & \dots & 0 \\
    \vdots & \vdots & \vdots & \ddots & \vdots \\
    0 & 0 & 0 & \dots & \mathbf{A}_K
\end{bmatrix}
, \quad
\bby = 
\begin{bmatrix}
    \bby_1 \\
    \bby_2 \\
    \bby_3 \\
    \vdots \\
    \bby_K
\end{bmatrix}
, \quad
\bbb = 
\begin{bmatrix}
    \bbb_1 \\
    \bbb_2 \\
    \bbb_3 \\
    \vdots \\
    \bbb_K
\end{bmatrix}.
\end{align}
The construction of Eq.~\eqref{eq:output} presupposes verifying all keypoints. However, should there be a keypoint that does not require verification, potentially due to being an outlier, the matrices associated with it are then adjusted accordingly: $ \bbA_k = \mathbf{I}$ and $\bbb_k = [M, \ldots, M]^T$.



\section{Error Propagation and Determination of Pooling Parameters}\label{sec:error_prop_pooling}
To calculate the parameters for average pooling, the initial step involves mapping the error thresholds $\bm{\epsilon}_r$ and $\bm{\epsilon}_t$ from the pose estimation onto the keypoints' error thresholds that may be correlated to each other, which is accomplished by employing sensitivity analysis methods.
Subsequently, we formulate an optimal error threshold allocation problem, aiming to assign independent error thresholds to each keypoint. This step is crucial for determining the allowable errors for each keypoint individually, ensuring that the overall pose estimation adheres to predefined error limits. The idea is graphically depicted in Fig.~\ref{fig:overview}.


\subsection{Error Propagation via Sensitivity Analysis}
Sensitivity analysis for nonlinear optimization explores the impact of first-order changes in the optimization parameters on the locally optimal solution~\cite{castillo2008sensitivity}. It involves analyzing an unconstrained parameterized nonlinear optimization problem expressed as $\min_{\bbx} F(\bbx; \bba)$, with $\bba$ representing the parameter vector. Here, let $z$ represent the optimal objective value given the parameter vector $\bba$, i.e.,
$
    z (\bba) = \min_{\bbx} F(\bbx; \bba). 
$
Sensitivity analysis~\cite{castillo2008sensitivity} connects the first-order derivatives $\partial \bbx$ and $\delta z$ with $\partial \bba$:
\begin{align}\label{eq:general_sensitivity}
   \begin{bmatrix}
       -F_{\bba} \\ -F_{\bbx\bba}  
    \end{bmatrix} \partial \bba  
    = \begin{bmatrix}
        F_{\bbx} & -\mathbf{1} \\ F_{\bbx\bbx} & \mathbf{0}
    \end{bmatrix}
    \begin{bmatrix}
        \partial \bbx \\ \partial z
    \end{bmatrix}.
\end{align}
In the context of pose estimation, the keypoint coordinates $\bbv$ are considered as parameters and the pose to be estimated represents the decision variables, the notation for partial derivatives $\partial$ is substituted with the notation for deivation $\delta$. Consequently, Eq.~\eqref{eq:general_sensitivity} is transformed into
\begin{align}
   \begin{bmatrix}
       -G_{\bbv} \\ -G_{\bbxi\bbv}  
    \end{bmatrix} \delta \bbv
    = 
    \begin{bmatrix}
        G_{\bbxi} & -\mathbf{1} \\ G_{\bbxi\bbxi} & \mathbf{0}
    \end{bmatrix}
    \begin{bmatrix}
        \delta \bbxi \\ \delta z
    \end{bmatrix},
\end{align}
where $\bbxi\in \mbR^{6} = [\bbr, \bbt]$ represents the 6D pose vector, with the 3D rotation matrix $\bbR$ expressed in the axis-angle or Euler angles representations form $\bbr$, and $G$ corresponds to the NLS optimization objective outlined in Eq.~\eqref{eq:nls}. Note that $\delta \bbv \in \mbR^{2K}$. The following notation is introduced
\begin{align}
 \bbM_{\bbv\bbxi} :=  \begin{bmatrix}
       -G_{\bbv} \\ -G_{\bbxi\bbv}  
    \end{bmatrix} \in \mbR^{7\times 2K}, \; 
   \bbM_{\bbxi} :=  \begin{bmatrix}  
        G_{\bbxi} & -\bm{1} \\ G_{\bbxi\bbxi} & \bm{0} 
    \end{bmatrix} \in \mbR^{7\times7}.
\end{align}
If $\bbM_{\bbxi}$ is invertible, then it follows that
$    \bbM_{\bbxi}^{-1} \bbM_{\bbv\bbxi} \delta \bbv =  [   \delta \bbxi, \delta z]^T$.
By extracting the rows in the matrix $\bbM_{\bbxi}^{-1} \bbM_{\bbv\bbxi}$ that correspond to $\delta \bbxi$ and naming the resulting matrix $\tilde{\bbM}_{\bbv\bbxi}$, we obtain the linear mapping 
$\Tilde{\bbM}_{\bbv\bbxi} \delta \bbv = \delta \bbxi$.
This leads to the establishment of a linear relationship between  2D keypoints errors and 6D pose errors. Upon specification of the keypoints $\bbv$ and the transform $\bm{\xi}$, the matrix $\Tilde{\bbM}_{\bbv\bbxi}$ becomes determined. 

Let $\Xi$ represent the set of tolerable errors, indicating the range within which deviations of the perturbed transform from the nominal transform are acceptable, meaning $\delta \bbxi \in \Xi$. {\color{\modifycolor}Let $\delta \ccalV$ denote the derived tolerable keypoint  errors that has the following property:
 $\delta \ccalV =  \left\{\delta \bbv \,|\,  \Tilde{\bbM}_{\bbv\bbxi} \delta \bbv   =  \delta \bbxi  \in \Xi \right\}$.
Note that $\delta \ccalV$ is an approximation of actual tolerable keypoint errors that lead to tolerable pose errors $\delta \bm{\xi}$. As the pose error bounds decrease, the accuracy of the PnP linearization improves, narrowing this disparity.} In what follows, we assume that $\Xi$ is defined as a polytope represented by linear inequalities, specifically $\Xi= \{\delta \bbxi \,|\, \bbP_{\bm{\xi}} \delta \bbxi \leq \bbb_{\bbxi}\}$. This inequality can be established based on a user-defined thresholds $\bm{\epsilon}_r$ and $\bm{\epsilon}_t$ as outlined in Problem~\ref{prob:certification}. By the linear mapping, the set $\delta\ccalV$ can also be characterized as a polytope:
    $\delta\ccalV = \left\{\delta \bbv \,|\, \bbP_{\bm{\xi}} \Tilde{\bbM}_{\bbv\bbxi} \delta \bbv \leq \bbb_{\bbxi}\right\} = \left\{ \delta \bbv \,|\, \bbP_{\bbv}  \delta \bbv \leq \bbb_{\bbv} \right\}$,
where $\bbP_{\bbv} =  \bbP_{\bbxi} \Tilde{\bbM}_{\bbv\bbxi}$ and $\bbb_{\bbv} = \bbb_{\bbxi}$. {\color{\modifycolor} While larger set  $\delta \ccalV$ is desirable, it is an approximation of actual tolerable errors due to the linearization of sensitivity analysis. To mitigate the risk of over-approximation, we  introduce a scaling factor to reduce the size of  set $\delta \ccalV$ in the next section.}

\subsection{Optimal Error Threshold Allocation}\label{sec:opt_error_allocation}
The tolerable errors of keypoints are interrelated within the derived polytope set $\delta\ccalV$. {\color{\modifycolor}In this section, we allocate the tolerable errors across each keypoint to ensure their tolerances are independent, as current verification tools lack the capability to verify dependencies between  keypoints.} The allocated error threshold  aims to be maximized, provided that the pose deviation remains tolerable.

For a given error vector $\delta \bbv$, we can uniquely define an axis-aligned, axis-symmetric hyper-rectangle, denoted as $\ccalH\ccalR(\delta \bbv) \subset \mbR^{2K}$, centered at the origin with each element $\delta \bbv_i$ representing the half-length of its sides. Axis symmetry indicates that the error thresholds are identical in opposite directions. We establish the following problem for optimal error threshold allocation:
{\allowdisplaybreaks
\begin{align}
  \max_{\delta \bbv \in \bbR_+^{2K}}  \quad & w_1 \prod_{k=1}^{2K} \delta \bbv_k + w_2 \Delta \label{eq:obj}\\
 \text{s.t.}\quad   & \ccalH\ccalR(\delta \bbv)  \subseteq \delta\ccalV, \label{eq:inclusion} \\
   \quad & \delta \bbv_k\geq \Delta, \quad  k = 1, \ldots, 2K. \label{eq:lower_bound}
\end{align}
}%
Constraint~\eqref{eq:inclusion} ensures that the hyper-rectangle is encompassed within the set of tolerable errors $\delta\ccalV$, while constraint~\eqref{eq:lower_bound} sets a minimum threshold for the side lengths. The goal defined in objective~\eqref{eq:obj} is to optimize a weighted sum of two objectives: the first seeks to identify the largest possible axis-aligned, axis-symmetric hyper-rectangle within a convex polytope, and the second strives to extend the minimal side length as much as feasible.

To address the optimal error threshold allocation problem, we translate constraint~\eqref{eq:inclusion} into a series of linear inequalities, drawing from the method in~\cite{behroozi2019largest} for identifying the largest axis-aligned hyper-rectangle inscribed within a convex polytope. Considering a hyper-rectangle $\ccalB = \{ \bbx \in \mbR^d \mid \bbl \leq \bbx \leq \bbu \}$ and a convex polytope $\ccalC = \{ \bbx \mid \bbP \bbx \leq \bbb \}$,~\cite{behroozi2019largest} demonstrates that $\ccalB \subseteq \ccalC$ if and only if
\begin{align}\label{eq:inclusion_ineq}
     \sum_{j=1}^{d} (\bbP_{ij}^+ \bbu_j - \bbP_{ij}^- \bbl_j) \leq \bbb_i, \quad i = 1, \ldots, n.
\end{align}
where $\bbP_{ij}^+ = \max\{\bbP_{ij},0\}$ and $\bbP_{ij}^- = \max\{-\bbP_{ij},0\}$. Given the axis symmetry in our scenario, it follows that $\bbu_j = -\bbl_j$. Substituting $\bbu_j$ with $\delta\bbv_j$, $\bbP$ with $\bbP_{\bbv}$, and $\bbb$ with $\bbb_\bbv$, we can reformulate Ineq.~\eqref{eq:inclusion_ineq} as
\begin{align}
    \sum_{j=1}^{2K} \delta\bbv_j (\bbP_{\bbv ij}^+  + \bbP_{\bbv ij}^-) \leq \bbb_{\bbv i}, \quad i = 1, \ldots, 6. \quad
\end{align}
Given that $\left\lvert \bbP_{\bbv ij}\right\rvert = \bbP_{\bbv ij}^+ + \bbP_{\bbv ij}^-$ where $\vert \cdot \vert$ returns element-wise absolute values, it follows that
\begin{align}\label{eq:inclusion_ineq_abs}
    \sum_{j=1}^{2K} \delta\bbv_j \left\lvert \bbP_{\bbv ij}\right\rvert \leq \bbb_{\bbv i}, \quad i = 1, \ldots, 6. \quad
\end{align}
In matrix notation, Ineq.~\eqref{eq:inclusion_ineq_abs} is expressed as
$
     |\bbP_\bbv| \delta\bbv \leq \bbb_{\bbv}. 
$
Ultimately, the problem of optimal error threshold allocation, encompassing objectives~\eqref{eq:obj} through constraints~\eqref{eq:lower_bound}, is equivalent to
{\allowdisplaybreaks
\begin{equation}\label{eq:formulation}
\begin{split}
  \max_{\delta \bbv \in \mbZ_+^{2K}}  \quad & w_1 \prod_{k=1}^{2K} \delta \bbv_k + w_2 \Delta \\
 \text{s.t.}\quad   &   \kappa\, |\bbP_\bbv| \delta\bbv \leq \bbb_{\bbv}, \\
   \quad & \delta \bbv_k\geq \Delta, \quad k = 1, \ldots, 2K,
\end{split}
\end{equation}
}%
where $\mbZ_+$ represents the set of non-negative integers, and $\kappa\in \mbR_+$ is the scaling factor that controls the actual size of polytope. The effect of the scaling factor $\kappa$ on the allocation is discussed in Section~\ref{sec:prob_property}. The solution, denoted by $\delta\bbv^*$, returns the maximum tolerable errors for all keypoints while ensuring that the pose errors remains within the error thresholds. This solution will determine the parameters for average pooling in Section~\ref{sec:model}.

\subsection{Determination of Average Pooling Parameters}

{\color{\modifycolor}A widely accepted practice to train the keypoint detection model is to regress the heatmaps centered on the ground truth keypoint. Building on this approach, we propose the following assumption and support it with evaluation results detailed in Section~\ref{sec:asmp}.}

\begin{assumption}\label{asmp:gaussian}
  {\color{\modifycolor} The softmax layer in the target model $\bbF_{\text{target}}$ produces a distribution with a peak at its center and exhibits symmetry along axes that intersect this peak.}
\end{assumption}

Upon determining the independent error threshold for each keypoint, we proceed to define the parameters for average pooling in the proxy model $\bbF_{\text{proxy}}$. Focusing on a particular keypoint with coordinates $\bbv = (\bbv_h, \bbv_v)$, and considering the error thresholds $\delta\bbv^*_h$ and $\delta\bbv^*_v$ allocated for horizontal and vertical directions respectively, we define the set 
\begin{align}
    \ccalV_{\delta} = \{ (h, v) \mid | h - \bbv_h| \leq \delta \bbv^*_h \land | v - \bbv_v| \leq \delta \bbv^*_v \}.
\end{align}
Only pixels within $\ccalV_\delta$ are considered acceptable potential predictions for the keypoint. The average pooling parameters for this keypoint are selected to ensure every pixel in $\ccalV_{\delta}$ is contained by the same pooling patch. This guarantees that all these pixels are pooled into the same after-pooling pixel as the keypoint, thereby belonging to the same class.

We begin by examining how predictions are generated in the proxy model $\md{proxy}$. Based on Assumption~\ref{asmp:gaussian}, the unnormalized heatmap returned by the backbone network $\bbF_{\text{b}}$ exhibits symmetry along axes passing through this peak, as softmax operation will not change the overall shape of the unnormalized heatmap. To infer the keypoint prediction, we concentrate on the average pooling patch that predominantly overlaps with the vicinity of the peak. Since the average pooling patches also exhibit axial symmetry, patches closer to the peak yield larger averaged pixel values. Consequently, the pooling patch with its center closest to the peak results in the predicted keypoint. This insight enables us to transform the determination of the location of the maximal pixel value after pooling into identifying the closest pooling patch to the peak of the unnormalized heatmap under Assumption~\ref{asmp:gaussian}. Next, we compute the average pooling parameters for each keypoint to ensure that: (i) the ground-truth keypoint aligns with the center of an average pooling patch, and (ii) {\color{\modifycolor}all pixels in proximity to this pooling patch fall within the allocated error thresholds from the ground truth keypoint, making them suitable to act as the peak of the unnormalized heatmap. Let $\bbs = (s_h, s_v)$ denote the stride, $\bbk = (k_h, k_v)$ the kernel size, and $\bbp = (p_h, p_v)$ the patch size.}

\begin{figure}[!t]
  \begin{minipage}[b]{0.6\textwidth}
    \centering
   \subfigure[]{
      \label{fig:pooling}
      \includegraphics[ width=\linewidth]{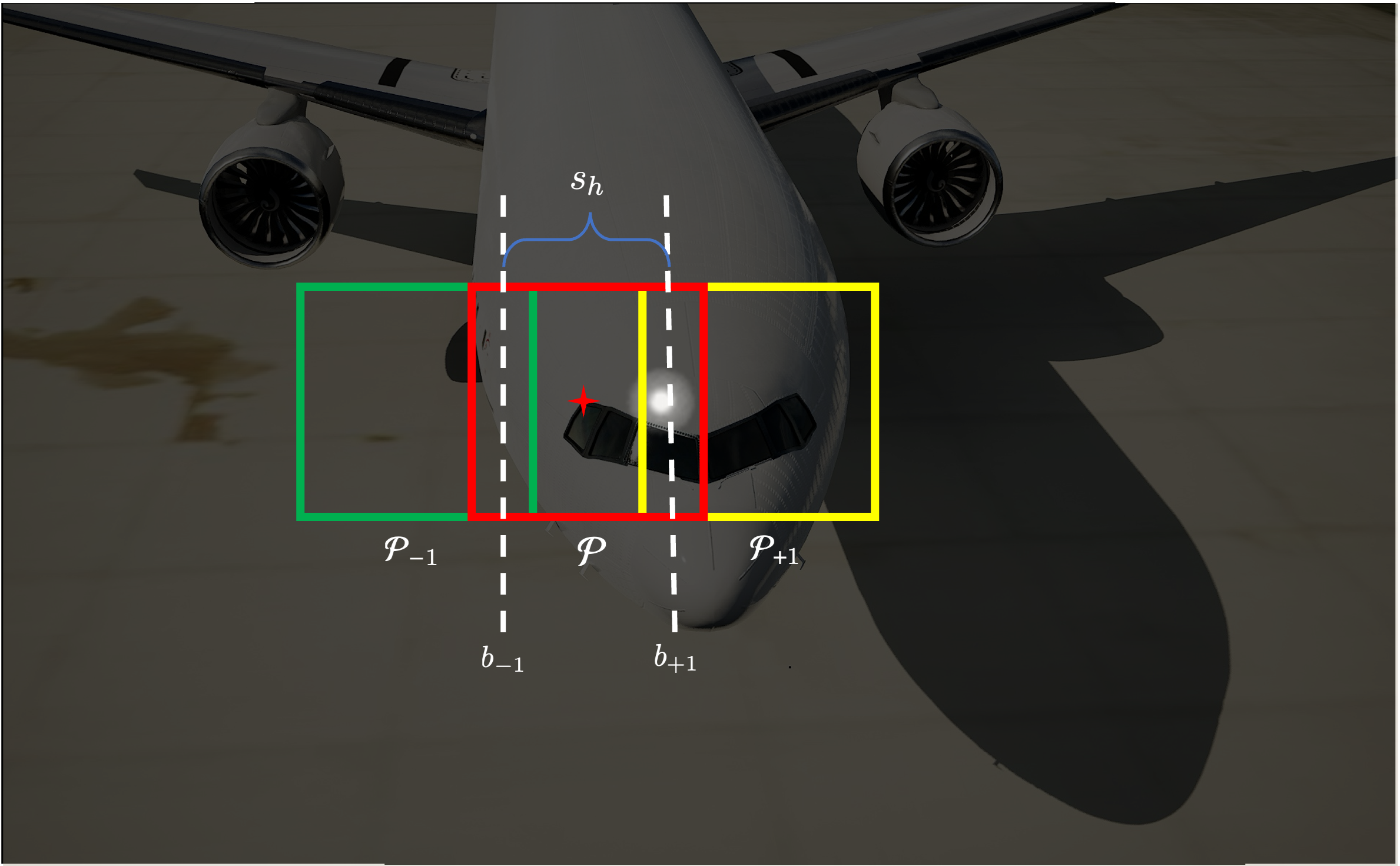}}
  \end{minipage}
  \hfill 
  \begin{minipage}[b]{0.32\textwidth}
    \centering
    \subfigure[]{
    \includegraphics[trim=10 50 30 40, clip, width=\textwidth]{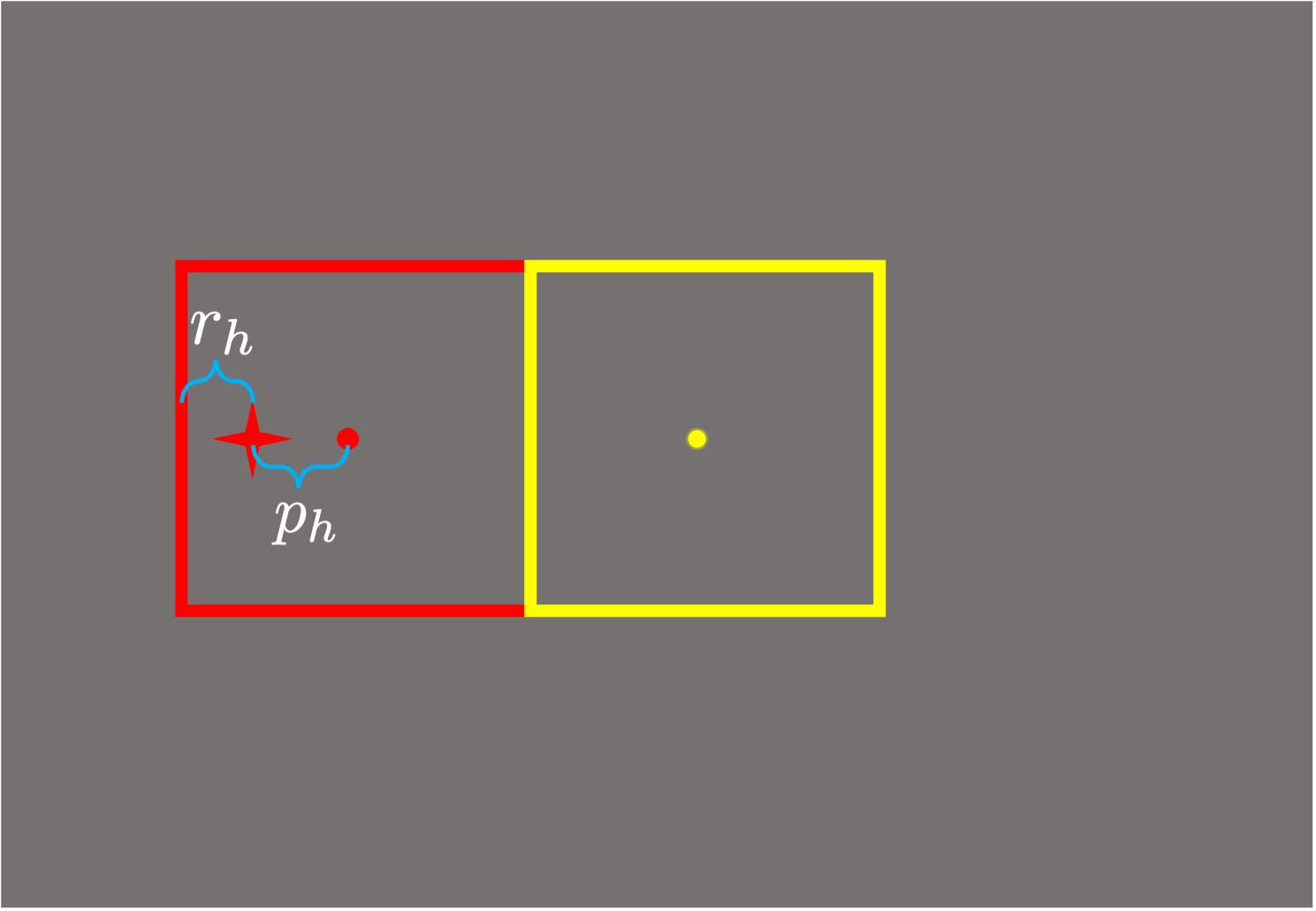}
    \label{fig:padding_a}
    }
    \subfigure[]{
    \includegraphics[trim=10 50 30 40, clip, width=\textwidth]{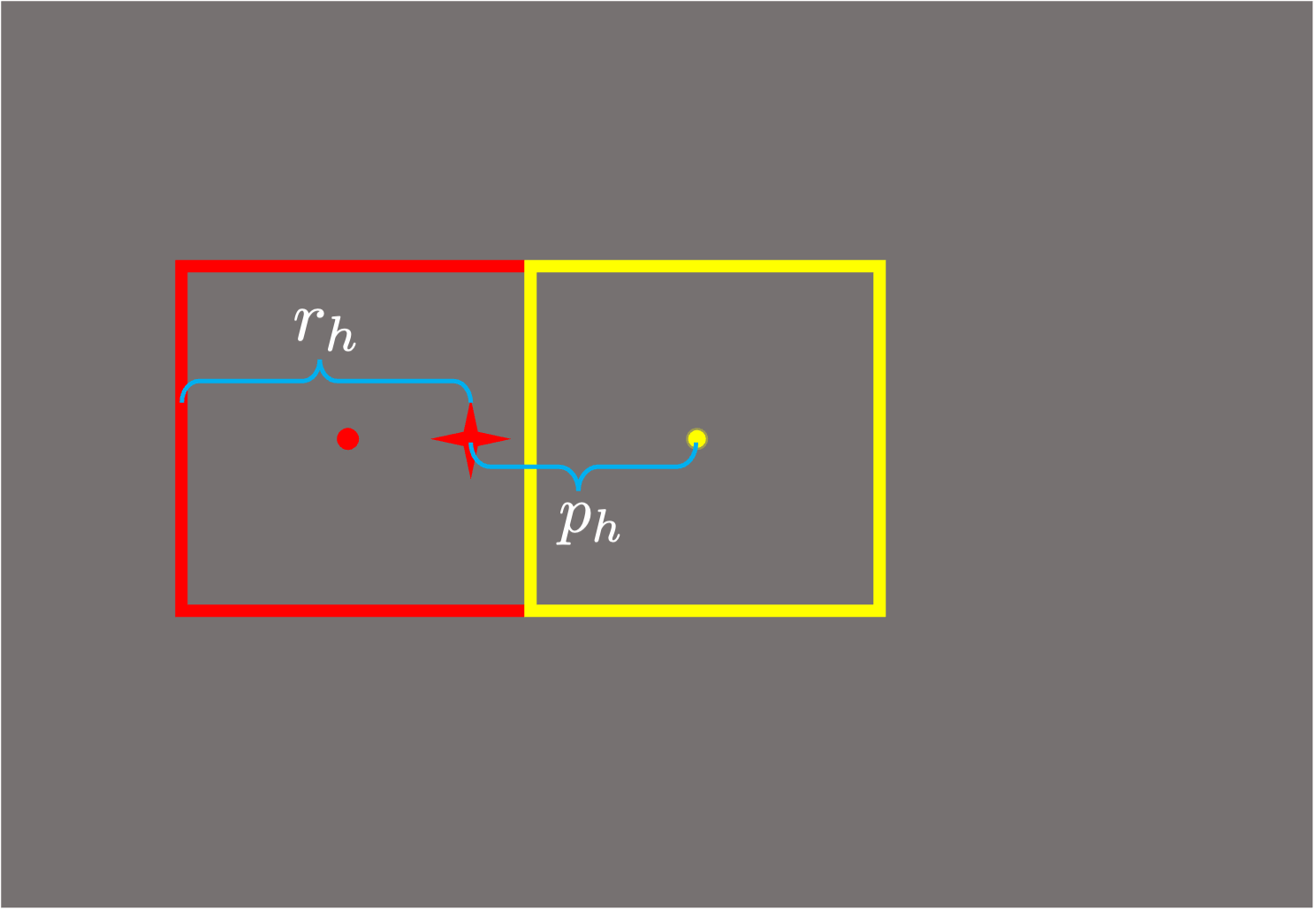}
    \label{fig:padding_b}
    }
  \end{minipage}
  \vspace{-10pt}
   \caption{{\color{\modifycolor} (a) Graphical depiction of determining the stride parameter. The unnormalized heatmap is overlaid on the airplane.  The most saturated area in the heatmap, located near the center, indicates its peak.  Consecutive average pooling patches are in different colors, denoted as $\ccalP_{-1}, \ccalP$, and $\ccalP_{+1}$, with dots indicating their centers. The {\color{red}red} star \raisebox{-3pt}{\includegraphics[height=0.9\baselineskip]{./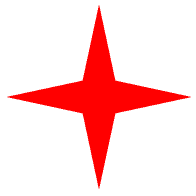}} denotes the ground-truth keypoint, which aligns with the center of the average pooling patch $\ccalP$.
   The dashed vertical lines, $b_{-1}$ and $b_{+1}$, represent perpendicular bisectors between the centers of adjacent patches, with a distance equal to the stride $s_h$.  Consequently, the {\color{red}red} pooling patch corresponds to the predicted keypoint. (b) and (c): Determination of the padding parameter.}}
\end{figure}

\paragraph{Determination of stride and kernel parameters.} 
The initial step involves determining the stride and kernel parameters, denoted as $s_h$ and $k_h$, respectively, as in Fig.~\ref{fig:pooling}. We focus on the horizontal axis (similar logic applies to the independent vertical axis). Let $\ccalP$ represent the pooling patch centered at a keypoint, with $\ccalP_{-1}$ and $\ccalP_{+1}$ denoting the adjacent patches to the left and right, respectively. Additionally, let $b_{-1}$ and $b_{+1}$ denote the perpendicular bisectors between these patches. The distance between these two perpendicular bisectors corresponds to the horizontal stride. To position the ground-truth keypoint at the center, an odd stride is required. As long as the peak of the unnormalized heatmap lies between $b_{-1}$ and $b_{+1}$, the pooling patch $\ccalP$ is the closest to the peak, resulting in the predicted keypoint. Let $\delta \bbv^*_h$ denote the allocated error threshold. To ensure that peaks falling between $b_{-1}$ and $b_{+1}$ remain permissible with respect to the error threshold, the threshold should be no less than half the stride distance from the ground-truth keypoint on either side, i.e.,
$
     \delta\bbv^*_h \geq \frac{1}{2} (s_h - 1),
$
which leads to the inequality $s_h \leq 2\delta\bbv^*_h + 1$. Choosing $s_h =  2 \delta \bbv^*_h + 1$ minimizes the post-pooling size. It's worth noting that there are no constraints on the kernel parameter. For simplicity, we set the kernel $k_h$ to be equal to the stride $s_h$.

\paragraph{Determination of the padding parameter.}
The next step involves determining the padding parameter to center a pooling patch at the ground-truth keypoint. Padding can be added from the left (horizontal) or from the top (vertical) to achieve this. Considering the horizontal axis where the kernel is equal to the stride, the idea is to identify the closest pooling patch to the keypoint from its right side and then shift it left by the distance between the keypoint and its center, as illustrated in Fig.~\ref{fig:padding_a} and~\ref{fig:padding_b}. Assuming $k_h = s_h$, let $r_h$ be the number of pixels the keypoint is from the left side of the closest pooling patch, {\color{\modifycolor}calculated as $r_h \equiv \bbv_h \pmod{k_h}$}. (i) If $r_h \leq \frac{\text{k}_h + 1}{2}$, the closest patch center to the right of the keypoint, belongs to the pooling patch containing the keypoint. (ii) Otherwise, the closest patch center to the right comes from the pooling patch immediately to the right of the one containing the keypoint. The horizontal padding parameter $p_h$ is determined as $\frac{k_h+1}{2} - r_h$ if $ r_h \leq \frac{k_h+1}{2}$, otherwise, as $\frac{3k_h+1}{2} - r_h$.
{\color{\modifycolor}Note that the shift is not unique because multiples of the stride can be added to $p_h$.}


\section{Theoretical Analysis}\label{sec:error_analysis}
{\color{\modifycolor}With the method we introduced (to reformulate the problem into a verifiable form using off-the-shelf verification tools), it is important to understand whether false analysis results will be introduced. To answer that question, we perform the following analysis.}

\subsection{Properties of the Proxy Model}
We examine two target models. The models ending with a DSNT layer and an argmax layer are represented by $\md{dsnt}$ and $\md{argm}$, respectively. The term $\md{target}$ is used to represent either target model. The notation $\md{}^k$ is employed to identify the $k$-th predicted keypoint. Recall that $\bar{\bbv}_k$ denote the averaged ground-truth coordinate of the $k$-th keypoint after the application of average pooing and argmax layers. Before providing theoretical results, we establish formal definitions for both the soundness and completeness related to the proxy model, which are visually represented in Fig.~\ref{fig:theory}.

\begin{definition}[Soundness]\label{def:soundness}
    A proxy model is deemed {\it sound} if, for cases where the proxy model's predicted keypoint aligns with the averaged ground-truth keypoint $\bar{\bbv}_k$, the deviation of the target model's prediction from the ground truth $\bbv_k$ does not exceed the error threshold $\delta\bbv_k^*$. Specifically, 
    \begin{align}
        \md{proxy}^k(\bbX) = \bar{\bbv}_k \Rightarrow \left\vert \md{target}^{k}(\bbX) - \bbv_k \right\vert \leq \delta\bbv_k^*, \quad \text{for}\; k = 1, \ldots, K.
    \end{align}
     Here, $\vert \cdot \vert$ indicates the computation of element-wise absolute differences.
\end{definition}
\begin{definition}[Completeness]\label{def:completeness}
    A proxy model is deemed {\it complete} if the difference between the target model's prediction and the ground truth  $\bbv_k$ is confined within the error threshold $\delta\bbv^*_k$, then the predicted keypoint by the proxy model matches the averaged ground truth $\bar{\bbv}_k$. That is, 
    \begin{align}
        \left\vert \md{target}^{k}(\bbX) - \bbv_k \right\vert \leq \delta\bbv_k^* \Rightarrow \md{proxy}^k(\bbX) = \bar{\bbv}_k, \quad \text{for}\; k = 1,\ldots, K.
    \end{align}
\end{definition}

\begin{theorem}\label{thm:soundness}
If Assumption~\ref{asmp:gaussian} holds, the proxy model is sound, that is,
\begin{align}
   \md{proxy}^{k}\left(\bbX\right) = \bar{\bbv}_k & \Rightarrow
   \left\vert \md{dsnt}^{k}\left(\bbX\right) - \gti  \right\vert \leq \delta\bbv_k^*,   \quad \text{for}\;  k = 1, \ldots, K. \label{eq:dsnt_imp} \\
 \md{proxy}^{k}\left(\bbX\right) = \bar{\bbv}_k & \Rightarrow   
 \left\vert \md{argm}^{k}\left(\bbX\right) - \gti \right\vert \leq \delta\bbv_k^*,   \quad \text{for}\;  k = 1, \ldots, K. \label{eq:argm_imp}
\end{align}
\end{theorem}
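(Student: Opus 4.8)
The plan is to reduce both implications in Theorem~\ref{thm:soundness} to a single geometric fact: under Assumption~\ref{asmp:gaussian} the proxy model returns the average-pooling patch whose center is nearest the peak of the unnormalized heatmap output by $\md{b}$, and the stride/padding chosen in Section~\ref{sec:error_prop_pooling} force that peak to lie within $\delta\bbv_k^*$ of the ground-truth keypoint $\gti$ exactly when the nearest patch is the one centered at $\gti$ (whose post-pooling coordinate is $\bar{\bbv}_k$). Both target models then collapse to the statement ``target prediction $=$ peak location,'' after which the threshold bound is immediate.

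First I would make rigorous the informal claim from the pooling-parameter construction. Since softmax preserves the shape of the heatmap, Assumption~\ref{asmp:gaussian} yields an axis-symmetric unnormalized heatmap peaked at the keypoint, so a pooling patch whose center is closer to the peak has a strictly larger average than one farther away; hence the argmax in $\md{h}'$ selects the patch nearest the peak, and by construction this center equals $\bar{\bbv}_k$ precisely when $\gti$ sits at the patch center. Assuming $\md{proxy}^{k}(\bbX)=\bar{\bbv}_k$ therefore means the patch $\ccalP$ centered at $\gti$ is the nearest one. Because $s_h=2\delta\bbv_h^*+1$ (and likewise for $s_v$), the perpendicular bisectors $b_{-1},b_{+1}$ bounding the region in which $\ccalP$ is nearest lie at distance $\tfrac12(s_h-1)=\delta\bbv_h^*$ from $\gti$ horizontally and $\delta\bbv_v^*$ vertically; thus the integer-pixel peak $\bbv_{\text{peak}}$ satisfies $|\bbv_{\text{peak}}-\gti|\le\delta\bbv_k^*$ element-wise.

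Next I would close each model separately. For $\md{argm}$, the argmax of the softmax-normalized heatmap coincides with the argmax of the unnormalized one, so $\md{argm}^{k}(\bbX)=\bbv_{\text{peak}}$ and Eq.~\eqref{eq:argm_imp} follows directly from the peak bound. For $\md{dsnt}$, the DSNT output is the coordinate-wise expectation taken against the normalized heatmap; a distribution symmetric about its peak (Assumption~\ref{asmp:gaussian}) has its centroid at the peak, giving $\md{dsnt}^{k}(\bbX)=\bbv_{\text{peak}}$ and hence Eq.~\eqref{eq:dsnt_imp}.

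The main obstacle will be the $\md{dsnt}$ case, where equating centroid and peak is exact only if the symmetric distribution is not truncated asymmetrically at the heatmap boundary; I would handle this by reading into Assumption~\ref{asmp:gaussian} that the symmetric support lies inside the heatmap window (or by arguing that near-boundary truncation is negligible). A secondary subtlety is justifying ``nearer patch, larger average'' from Assumption~\ref{asmp:gaussian}, which literally supplies only central symmetry and a peak; I would need the implied monotone decay away from the peak in order to compare patch averages, and I should state this explicitly as part of what the assumption encodes.
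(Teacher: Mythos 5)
Your proposal is correct and takes essentially the same route as the paper: the paper likewise reduces the theorem to the facts that the stride choice $s_h = 2\delta\bbv_h^*+1$ places the perpendicular bisectors at distance $\delta\bbv_k^*$ from $\gti$, that softmax preserves the peak so $\md{argm}$ returns the peak location, and that $\md{dsnt}$ agrees with $\md{argm}$ (its Proposition~\ref{prop:equiv}), whose shift-based computation is exactly your ``centroid of a symmetric distribution equals its peak'' argument in explicit form. The two caveats you flag---boundary truncation of the symmetric distribution and the need for monotone decay away from the peak to conclude ``nearer patch, larger average''---are genuine, but they are equally implicit in the paper's own proof, so flagging them strengthens rather than departs from the argument.
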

To prove Theorem~\ref{thm:soundness}, we first prove the equivalence of two target models.
    \begin{proposition}\label{prop:equiv}
      If Assumption~\ref{asmp:gaussian} holds, the two target models produce the same predictions, i.e.,
    $\md{dsnt}^k(\bbX) =  \md{argm}^k(\bbX),    \quad \text{for}\;   k = 1, \ldots, K$.
    \end{proposition}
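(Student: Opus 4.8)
The plan is to exploit that both target models are identical up to and including the softmax layer, differing only in the final coordinate-extraction rule, and then to reduce the claim to a first-moment-equals-center-of-symmetry fact about the normalized heatmap guaranteed by Assumption~\ref{asmp:gaussian}. First I would fix a keypoint index $k$ and let $p_{h,v}$ denote the shared normalized heatmap for that keypoint produced by $\md{b}$ followed by the softmax; both $\md{dsnt}^k(\bbX)$ and $\md{argm}^k(\bbX)$ are then functions of this same distribution $p$, so it suffices to show the two readouts agree. The argmax readout returns the location of the maximal entry of $p$, which by Assumption~\ref{asmp:gaussian} is the peak, sitting at the center of symmetry; denote it $(h^\ast, v^\ast)$. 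The DSNT readout returns the first moment $\left(\sum_{h,v} p_{h,v}\, h,\ \sum_{h,v} p_{h,v}\, v\right)$.

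The core step is to show this centroid equals $(h^\ast, v^\ast)$, which I would argue coordinate-wise. By the mirror symmetry of $p$ about the vertical axis $h = h^\ast$ asserted in Assumption~\ref{asmp:gaussian}, every pixel at horizontal offset $+d$ from the peak is matched by one at offset $-d$ carrying equal mass; pairing these, the net horizontal deviation from $h^\ast$ cancels, so $\sum_{h,v} p_{h,v}\, h = h^\ast$. The symmetric argument about the horizontal axis $v = v^\ast$ gives $\sum_{h,v} p_{h,v}\, v = v^\ast$. Hence the DSNT centroid is exactly $(h^\ast, v^\ast)$, matching the argmax readout, and therefore $\md{dsnt}^k(\bbX) = \md{argm}^k(\bbX)$ for every $k$.

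The main obstacle I anticipate is making the symmetric pairing rigorous on a finite pixel grid: the cancellation is only clean if the support of $p$ extends symmetrically about $(h^\ast, v^\ast)$, since unmatched boundary pixels could otherwise bias the centroid, and one must also ensure the peak lands on a grid point so that the argmax location and the moment-based coordinate live in a common reference frame. I would handle this by reading Assumption~\ref{asmp:gaussian} as exact symmetry of the entire distribution about the two axes through the peak (which forces all mass to be supported in symmetric pairs and places the center of symmetry at a grid point), and by noting that any fixed affine coordinate convention DSNT uses is shared with the argmax readout after the same rescaling, so the equality of locations is preserved. With these conventions fixed, the coordinate-wise cancellation completes the proof.
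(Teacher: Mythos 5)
Your proof is correct and takes essentially the same approach as the paper's: both reduce the claim to showing that, under the axial symmetry of Assumption~\ref{asmp:gaussian}, the DSNT first moment coincides with the peak returned by argmax, via pairwise cancellation of symmetric mass about the peak. The paper organizes the identical cancellation slightly differently---first evaluating DSNT on a distribution centered at the heatmap midpoint (giving zero), then propagating a horizontal shift of $c$ columns to a DSNT output of $2c/n$ and re-identifying that value with the peak's column index---but the key lemma and the coordinate-frame bookkeeping are the same as yours.
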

    
\begin{figure}[!t]
    \centering
    \includegraphics[width=0.9\linewidth]{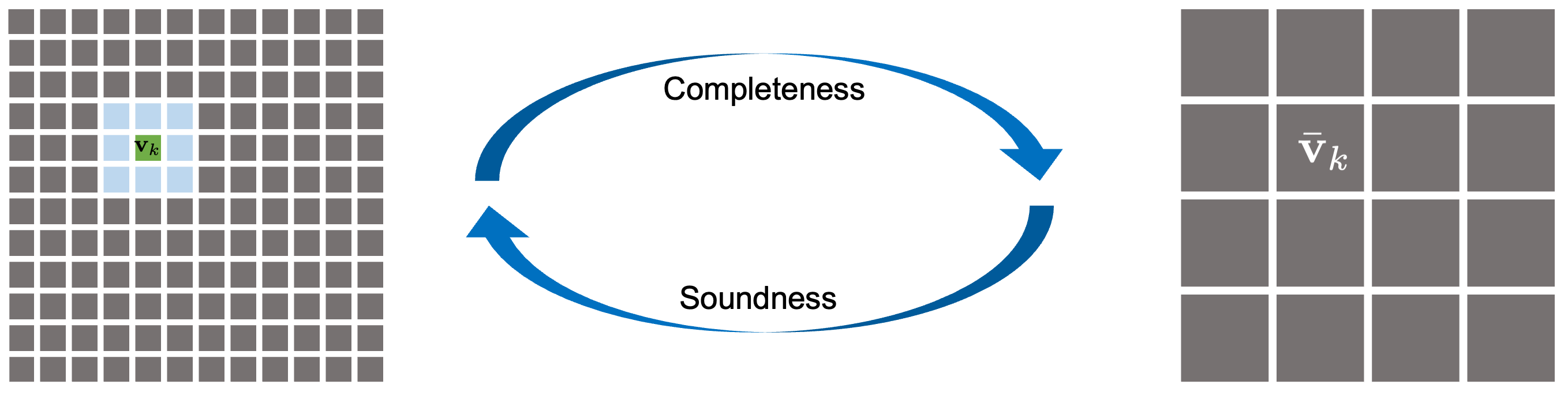}
    \caption{Illustration of soundness and completeness of the proxy model: The left heatmap, sized $12\times12$ and generated by the target model, is transformed into a $4\times4$ heatmap by the proxy model, utilizing pooling patches with both kernel and stride set to 3. Both the averaged ground-truth keypoint $\bar{\bbv}_k$ and the ground-truth keypoint $\bbv_k$ are marked in {\color{green}green}. The area highlighted in {\color{blue}blue} encompasses pixels located within a $\delta \bbv_k^*$ distance from the ground truth.}
    \label{fig:theory}
\end{figure}

\begin{proof}

{\color{\modifycolor} Before delving into the proof, we first introduce how DSNT extracts a keypoint from a normalized heatmap. Let $\bbH$ represent a normalized heatmap of dimensions $m \times n$, which is generated by a softmax layer. Define $\bbC$ as an $m \times n$ matrix where each entry $\bbC_{ij}$ is calculated by $\bbC_{ij} = \frac{2j-(n+1)}{n}$. Consequently, each column of $\bbC$ contains identical values. These values fall within the range of $(-1, 1)$, with the center value at 0. To determine the column coordinate $\bbv_h$ of the keypoint, DSNT computes the sum of the element-wise product between $\bbH$ and $\bbC$, expressed as $\bbv_h = \sum_{i, j} \bbH_{ij} \bbC_{ij}$. The row coordinate is obtained in a similar way.}

Without loss of generality, due to independence of horizontal and vertical axes, we will show for the horizontal direction. Considering a symmetric heatmap with dimensions $m \times n$, we initially position the center of the distribution at the $\frac{m+1}{2}$-th row and the $\frac{n+1}{2}$-th column, effectively placing the distribution's center at the heatmap's midpoint. Upon applying the DSNT process, the resulting keypoint coordinate is (0, 0). This outcome is attributed to the distribution's symmetry along both the horizontal and vertical axes that intersect at its center, with the DSNT operation cancelling out symmetric values relative to the origin. Subsequently, we consider a horizontal displacement of the distribution by $c$ columns, where a positive $c$ indicates a shift to the right. The column coordinate $\bbv_h$, as determined by DSNT following the horizontal shift, is then
{\allowdisplaybreaks
\begin{align}
    \sum_{i, j} \bbH_{ij} \bbC_{ij} \  \overset{\circled{1}}{=} \   \sum_{i, j'} \bbH_{ij'} \left(\bbC_{ij'} +  \frac{2c}{n}\right)\  \overset{\circled{2}}{=} \   \sum_{i, j'} \bbH_{ij'} \bbC_{ij'} + \sum_{i, j'} \bbH_{ij'} \frac{2c}{n} \ \overset{\circled{3}}{=}\   0 + \frac{2c}{n}  \sum_{i, j'} \bbH_{ij'}  \ \overset{\circled{4}}{=} \  \frac{2c}{n} \label{eq:sum_distribution}
\end{align}
}%
In~\circled{1}, the pixel located at $(i, j')$ is the corresponding pixel at $(i, j)$ prior to the horizontal shift, with both pixels having identical values, denoted as $\bbH_{ij} = \bbH_{ij'}$. The value $\frac{2c}{n}$ represents the  difference across every $c$ columns within the  matrix $\bbC$.
For~\circled{3}, the expression $\sum_{i, j'} \bbH_{ij'} \bbC_{ij'}$ equals zero, reflecting the distribution's initial centering at the heatmap's midpoint. In~\circled{4}, the probability over distribution sums up to 1. The calculation of the DSNT column coordinate as $\frac{2c}{n} \in (-1, 1)$ aligns it with the heatmap's $(\frac{n+1}{2}+c)$-th column. On the other hand, the argmax operation in $\md{argm}^k$ returns the  distribution's center, located at the $(\frac{n+1}{2}+c)$-th column following the shift. Consequently, we establish that $\md{dsnt}^{k}(\bbX) = \md{argm}^{k}(\bbX)$, thereby completing the proof.
\end{proof}

According to Proposition~\ref{prop:equiv}, it is sufficient to validate Eq.~\eqref{eq:argm_imp} that connects $\md{proxy}$ with $\md{argm}$ to prove Theorem \ref{thm:soundness}. The distinction between the $\md{proxy}$ and $\md{argm}$ models lies solely in their method of handling the unnormalized heatmap, specifically whether they use a softmax or an average pooling layer as depicted in Fig.~\ref{fig:overview}. With this, we are poised to validate Theorem~\ref{thm:soundness}, a conclusion that naturally follows from the way we identify average pooling parameters.
\begin{proof}
{\color{\modifycolor}Considering the $k$-th keypoint $\bbv_k$, if $\md{proxy}^{k}(\bbX) = \bar{\bbv}_k$, it implies that average pooling parameters ensure that the closest pooling patch $\ccalP_{\bbv_k}$ to the ground-truth keypoint $\bbv_k$ is exactly centered at $\bbv_k$.
 The selection of stride parameters ensures that the heatmap's peak is within a $\delta\bbv^*_k$ distance from the center of $\ccalP_{\bbv_k}$, which is also the ground-truth coordinate $\bbv_k$. Given that the softmax layer preserves the peak's position, the prediction $\md{argm}^k$ precisely returns the peak's location, leading to the conclusion that $\md{argm}^{k}(\bbX)$ is within a $\delta\bbv^*_k$ distance from $\bbv_k$, i.e.,
 $\left\vert \md{argm}^{k}(\bbX) - \gti \right\vert \leq \delta\bbv_k^*$.}
\end{proof}

\begin{theorem}
If Assumption~\ref{asmp:gaussian} holds, the proxy model is complete, that is,
\begin{align}
\left\vert \md{dsnt}^{k}\left(\bbX\right) - \gti  \right\vert \leq \delta\bbv_k^* & \Rightarrow \md{proxy}^{k}\left(\bbX\right) = \bar{\bbv}_k,  \quad \text{for}\;  k = 1, \ldots, K. \label{eq:comp_dsnt_imp} \\
\left\vert \md{argm}^{k}\left(\bbX\right) - \gti \right\vert \leq \delta\bbv_k^*  & \Rightarrow   \md{proxy}^{k}\left(\bbX\right) = \bar{\bbv}_k,  \quad \text{for}\;  k = 1, \ldots, K. \label{eq:comp_argm_imp}
\end{align} 
\end{theorem}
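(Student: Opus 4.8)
The plan is to observe that completeness is the exact logical converse of the soundness statement already proved, and that the tightness of the pooling-parameter choice makes both directions hold simultaneously. I would begin exactly as in Theorem~\ref{thm:soundness}: by Proposition~\ref{prop:equiv}, under Assumption~\ref{asmp:gaussian} we have $\md{dsnt}^{k}(\bbX)=\md{argm}^{k}(\bbX)$, so implications~\eqref{eq:comp_dsnt_imp} and~\eqref{eq:comp_argm_imp} coincide and it is enough to prove the $\md{argm}$ version. Because the softmax layer preserves the location of the peak of the unnormalized heatmap and $\md{argm}$ reads off that peak, the hypothesis $\lvert\md{argm}^{k}(\bbX)-\bbv_k\rvert\le\delta\bbv_k^*$ is equivalent to the statement that the peak lies within the allocated threshold box $\ccalV_{\delta}$ around the ground-truth keypoint $\bbv_k$.

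Next I would translate this threshold box into a statement about pooling patches. Working on the horizontal axis (the vertical axis is independent and identical), the stride was fixed at $s_h = 2\delta\bbv^*_h+1$ with kernel equal to stride and with the patch $\ccalP$ centered exactly at $\bbv_k$; hence the $s_h$ integer pixels covered by $\ccalP$ are precisely those at horizontal distance at most $\delta\bbv^*_h$ from $\bbv_k$. Therefore the peak, which by hypothesis satisfies this distance bound, necessarily falls inside $\ccalP$ --- equivalently, it lies strictly between the perpendicular bisectors $b_{-1}$ and $b_{+1}$ of Fig.~\ref{fig:pooling}. Under Assumption~\ref{asmp:gaussian} the heatmap is symmetric about the axes through its peak, so the pooling patch whose center is closest to the peak produces the largest averaged value; since the peak lies in $\ccalP$, that closest patch is $\ccalP$ itself. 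Consequently the argmax after average pooling selects $\ccalP$, i.e.\ $\md{proxy}^{k}(\bbX)=\bar{\bbv}_k$, which is the desired conclusion.

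The main obstacle, and the point worth stating carefully, is the exactness of the discrete counting rather than any hard inequality. The equality $s_h = 2\delta\bbv^*_h + 1$ is what makes the pixel set of $\ccalP$ coincide with the threshold box $\ccalV_{\delta}$: had the stride been chosen strictly smaller, a peak within the threshold could fall outside $\ccalP$ and break completeness, while a strictly larger stride would admit peaks outside the threshold that still map to $\ccalP$ and break soundness. Thus the same tight choice of pooling parameters that yields Theorem~\ref{thm:soundness} simultaneously yields completeness, and the two proofs differ only in the direction of the implication between \emph{the peak lies in $\ccalP$} and \emph{the peak is within $\delta\bbv_k^*$ of $\bbv_k$}. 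I would also confirm that the symmetry argument giving \emph{closest patch implies largest average} is applied identically to the soundness proof, so that no new monotonicity property of the heatmap is required beyond Assumption~\ref{asmp:gaussian}.
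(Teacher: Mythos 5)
Your proposal is correct and follows essentially the same route as the paper's proof: reduce to $\md{argm}$ via Proposition~\ref{prop:equiv}, then use the fact that the pooling parameters (stride $s_h = 2\delta\bbv^*_h+1$, kernel equal to stride, patch centered at $\bbv_k$) guarantee that every pixel within the $\delta\bbv^*_k$ threshold of the ground-truth keypoint has the patch $\ccalP$ as its closest pooling patch, so the symmetry-based ``closest patch wins the argmax'' argument yields $\md{proxy}^{k}(\bbX)=\bar{\bbv}_k$. Your additional observation that the \emph{tight} choice $s_h = 2\delta\bbv^*_h+1$ (rather than merely $s_h \le 2\delta\bbv^*_h+1$) is what makes completeness hold is a correct and useful elaboration of a point the paper leaves implicit, but it does not change the underlying argument.
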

\begin{proof}
Based on Proposition~\ref{prop:equiv}, we focus exclusively on the target model $\md{argm}$. The selection of pooling parameters guarantees that every pixel within the $\delta \bbv^*_k$ proximity of the ground-truth keypoint---eligible to be recognized as the predicted keypoint---shares the same closest pooling patch. This closest pooling patch is identical to that of the ground-truth keypoint. As a result, these pixels yield an averaged pixel that is the same as that of the ground-truth keypoints.
\end{proof}

\begin{figure}[!t]
    \centering
    \includegraphics[width=0.7\linewidth]{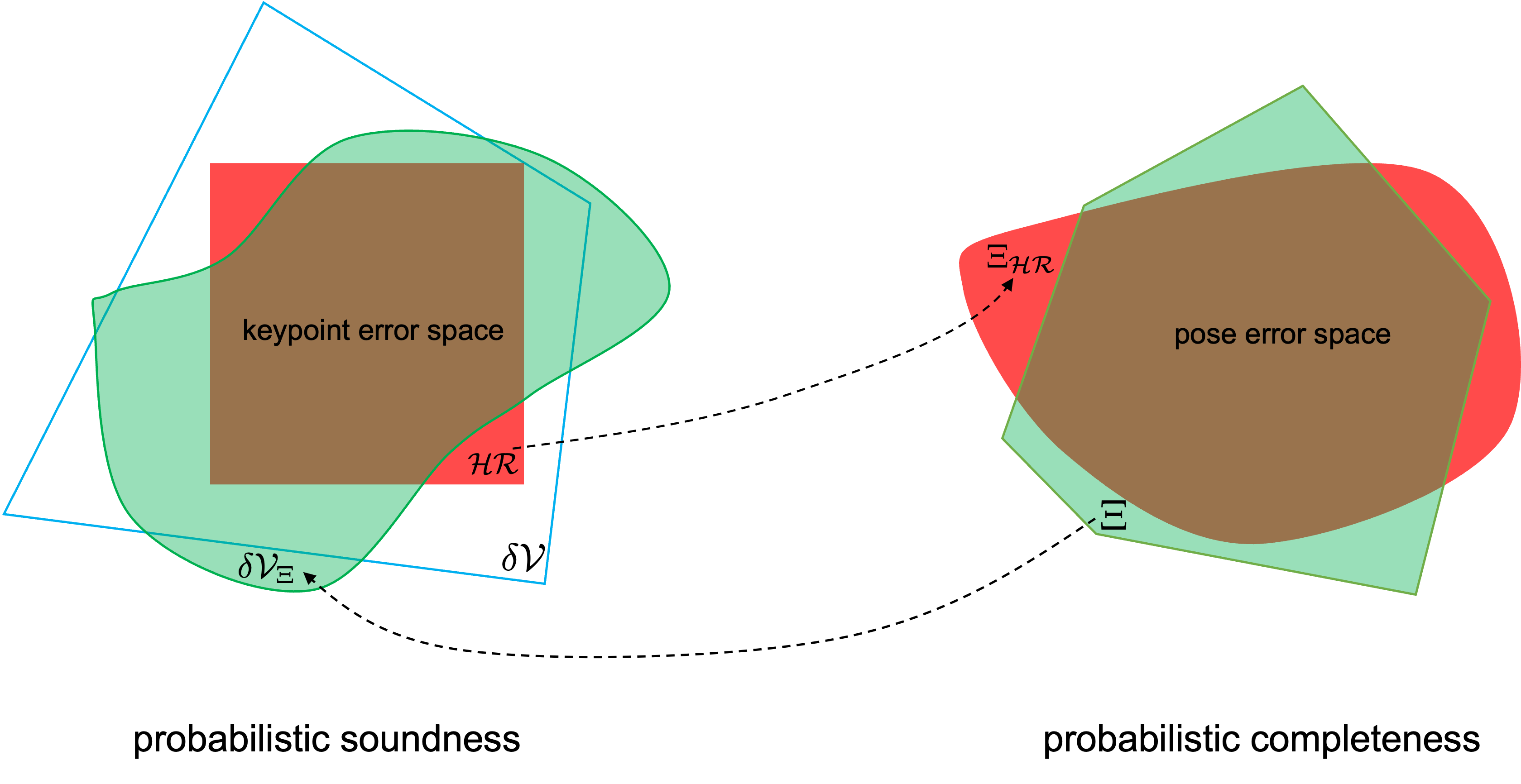}
     \caption{Illustration of the probabilistic soundness and completeness. On the left, the {\color{green} green} area represents the ground-truth tolerable errors on keypoints $\delta \ccalV_{\Xi}$. The {\color{red} red} patches ($\ccalH \ccalR \setminus \delta \ccalV_{\Xi}$) indicate the keypoint errors that result in pose errors exceeding tolerance. The {\color{brown} brown} area ($\ccalH \ccalR \cap \delta \ccalV_{\Xi}$) shows the keypoint errors that lead to tolerable pose errors. Similarly, on the right,  the {\color{brown}brown} area  ($\Xi \cap \Xi_{\ccalH\ccalR}$) represents the pose errors that cause keypoint errors within $\ccalH\ccalR$. {\color{\modifycolor}The dashed black lines indicate the correlations between these two spaces.}}
    \label{fig:prob_prop}
 \end{figure}
 
\subsection{Properties of the Optimal Error Threshold Allocation}
In the previous section, we examined the properties of the proxy model after determining the error threshold for keypoints. This section focuses on the properties of the optimal error threshold allocation. We introduce the concepts of probabilistic soundness and probabilistic completeness to measure the relationship between tolerable errors in pose and keypoints, as depicted in Fig.~\ref{fig:prob_prop}. 

Considering the set of acceptable pose errors $\Xi$, let $\delta \ccalV_{\Xi}$ represent the corresponding set of acceptable keypoint errors. Note that $\delta \ccalV_{\Xi}$ might not be polytopic, due to the non-linear nature of the PnP method, even though $\Xi$ is polytopic. Let $\mu(\cdot)$ denote the measure of a set.
    \begin{definition}[Probabilistic Soundness]
    Probabilistic soundness is defined by the ratio $\frac{\mu(\delta \ccalV_{\Xi} \cap \ccalH\ccalR)}{\mu(\ccalH\ccalR)}$, which is the fraction of keypoint errors in $\ccalH\ccalR$ that result in tolerable pose errors within $\Xi$.
    \end{definition}
    Likewise, considering the set of tolerable errors on keypoints $\ccalH \ccalR$, let $\Xi_{\ccalH\ccalR}$ represent the corresponding set of tolerable errors on poses.
     \begin{definition}[Probabilistic Completeness]
     Probabilistic completeness is defined as the ratio $\frac{\mu(\Xi_{\ccalH\ccalR} \cap \Xi)}{\mu(\Xi)}$, which represents the fraction of permissible pose errors in $\Xi$ that result in errors on keypoints within $\ccalH\ccalR$.
    \end{definition}
Deriving an exact or lower bound for these two metrics is difficult, due to the challenges in making definitive conclusions using linearization-based sensitivity analysis, as the accuracy of local linearizations varies with instances, such as varying poses. In Section~\ref{sec:prob_property}, we perform statistical tests to approximate these two values, offering insights into the practicality of our framework.

\section{Evaluation Results}\label{sec:evaluation}

\subsection{Verified Keypoint Detection Model and Perturbations}
\subsubsection{Verified keypoint detection model}
{\color{\modifycolor}  The verified keypoint detection model, illustrated in Fig.~\ref{fig:verified_model}, is modified from the proxy model $\md{proxy}$. This verified model includes three components:
\paragraph{1. Backbone model}  The dataset contains 7,320 images, each with dimensions of $1920 \times 1200$. 
\begin{itemize}
    \item CNN. The architecture includes 5 convolutional layers and an equal number of deconvolutional layers, designed to handle inputs of size 64$\times$64$\times$3. The model comprises 39 layers and contains around 6.57$\times 10^5$ trainable parameters.
    \item ResNet-18. This model includes 8 residual blocks  and  processes inputs of size 256$\times$256$\times$3. It is composed of 84 layers and possesses approximately 1.2$\times 10^7$ trainable parameters, which is approximately an order of magnitude larger than existing works~\cite{kouvaros2023verification}.
\end{itemize}
}

\begin{figure}[!t]
    \centering
    \includegraphics[width=0.75\linewidth]{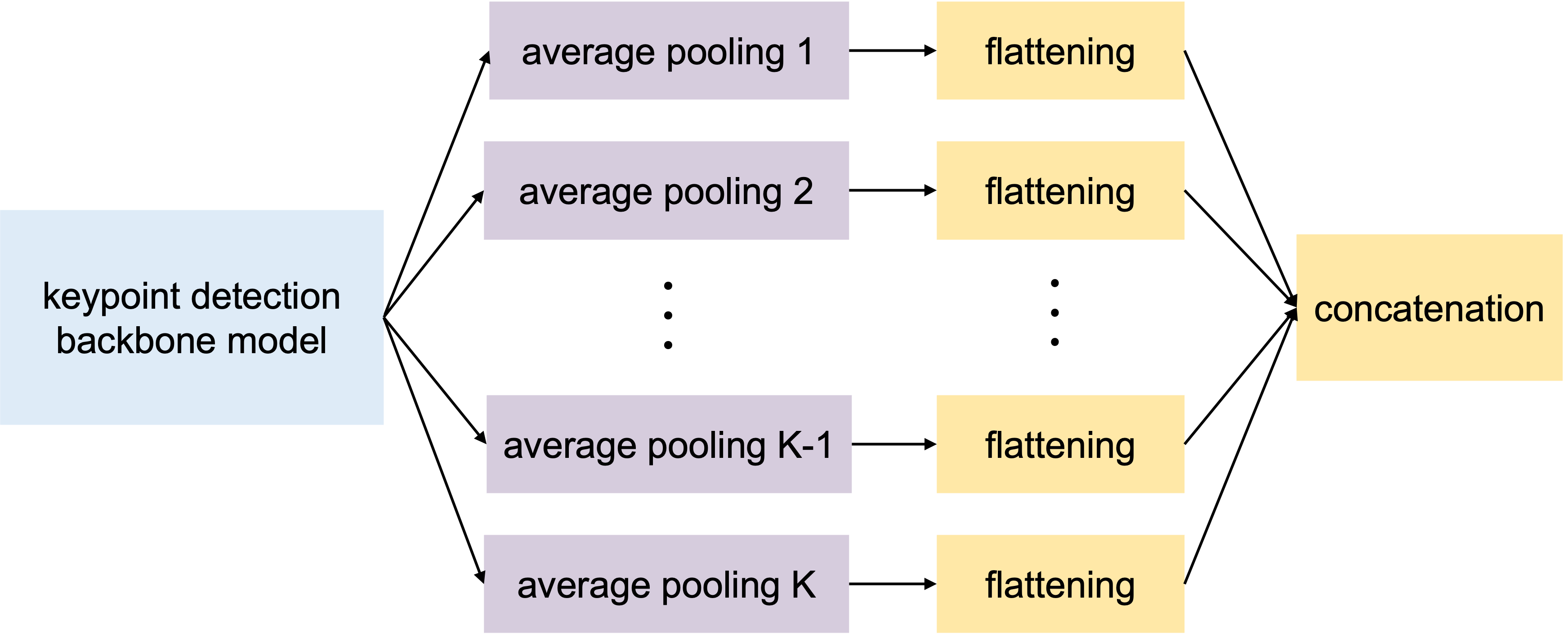}
    \caption{Verified keypoint detection model.}
    \label{fig:verified_model}
\end{figure}

\begin{figure}[!t]
    \centering
     \subfigure[]{
      \label{fig:asset}
      \includegraphics[width=0.35\linewidth]{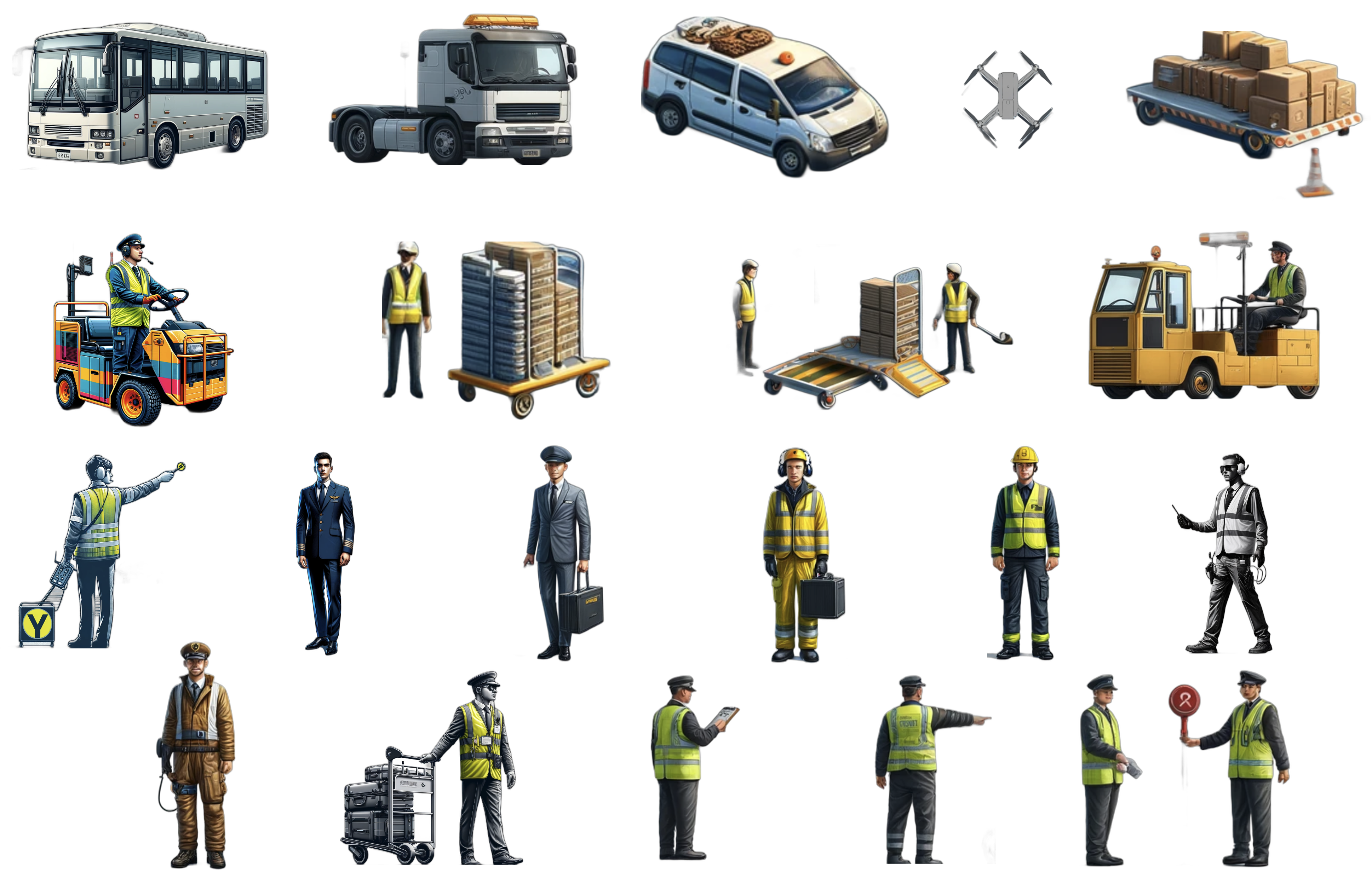}}
     \subfigure[]{
      \label{fig:overlapping_airplanes}
      \includegraphics[width=0.35\linewidth]{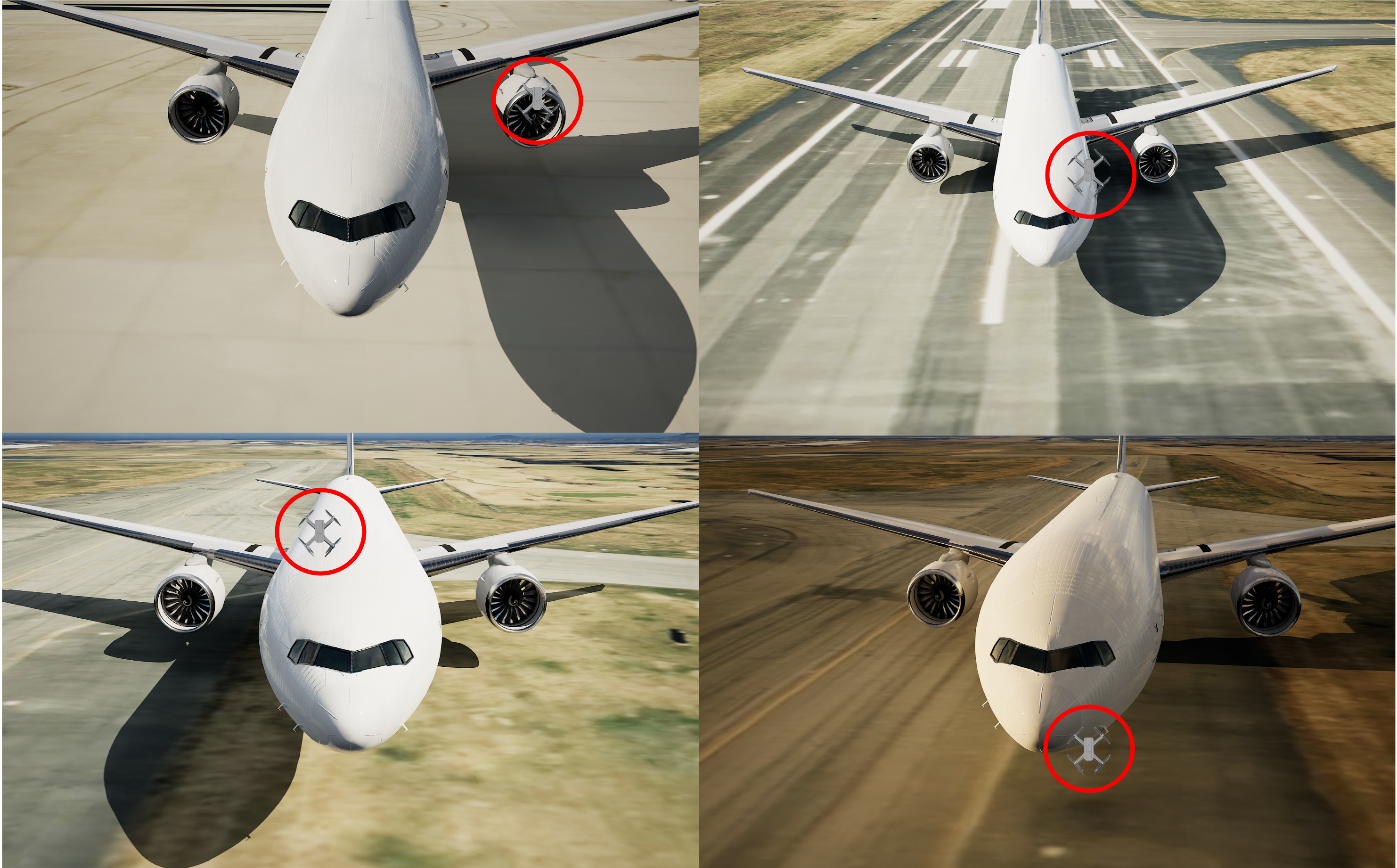}}
        \subfigure[]{
      \label{fig:non-overlapping_airplane}
      \includegraphics[width=0.35\linewidth]{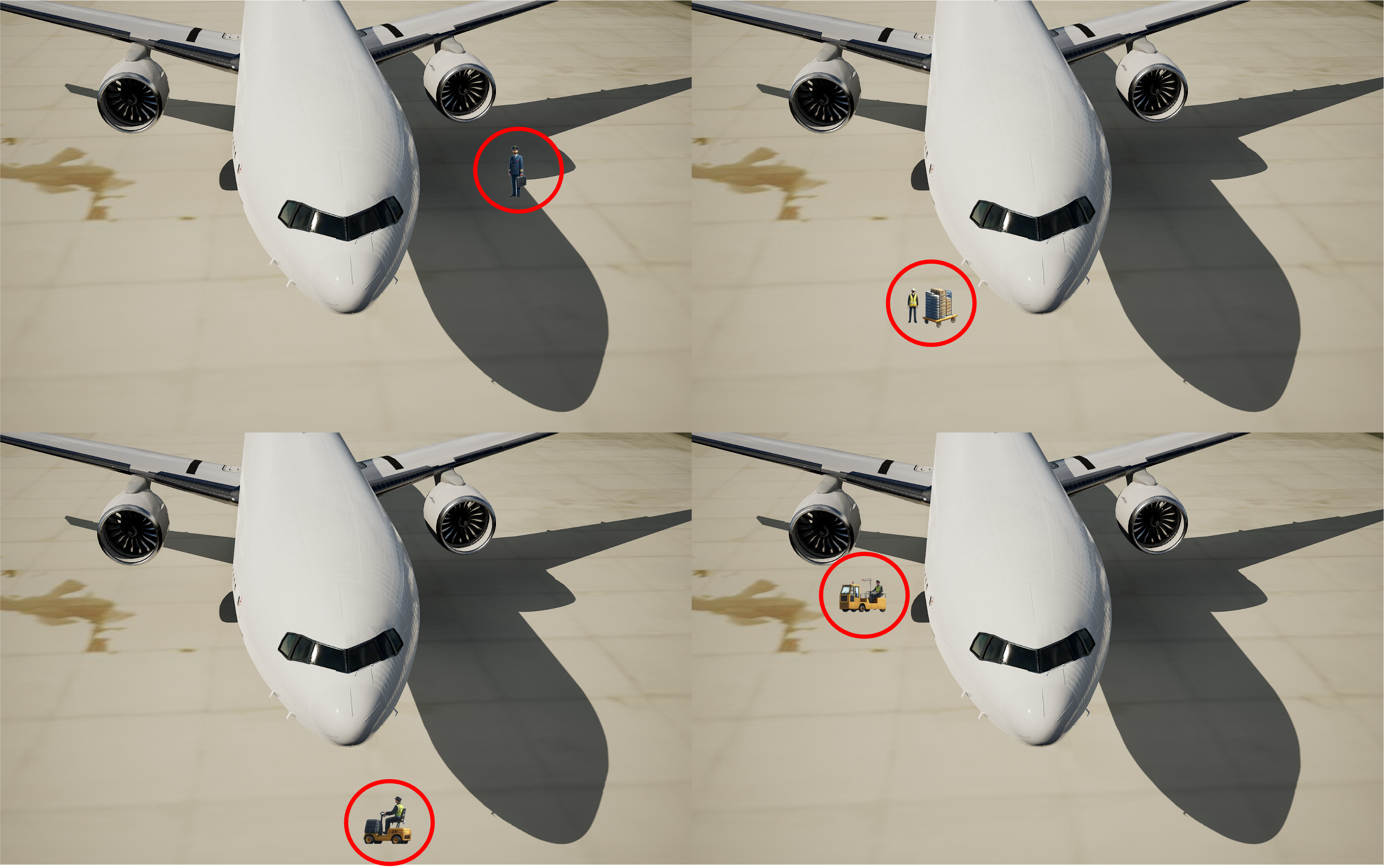}}
        \subfigure[]{
      \label{fig:sampled_airplane}
      \includegraphics[width=0.35\linewidth]{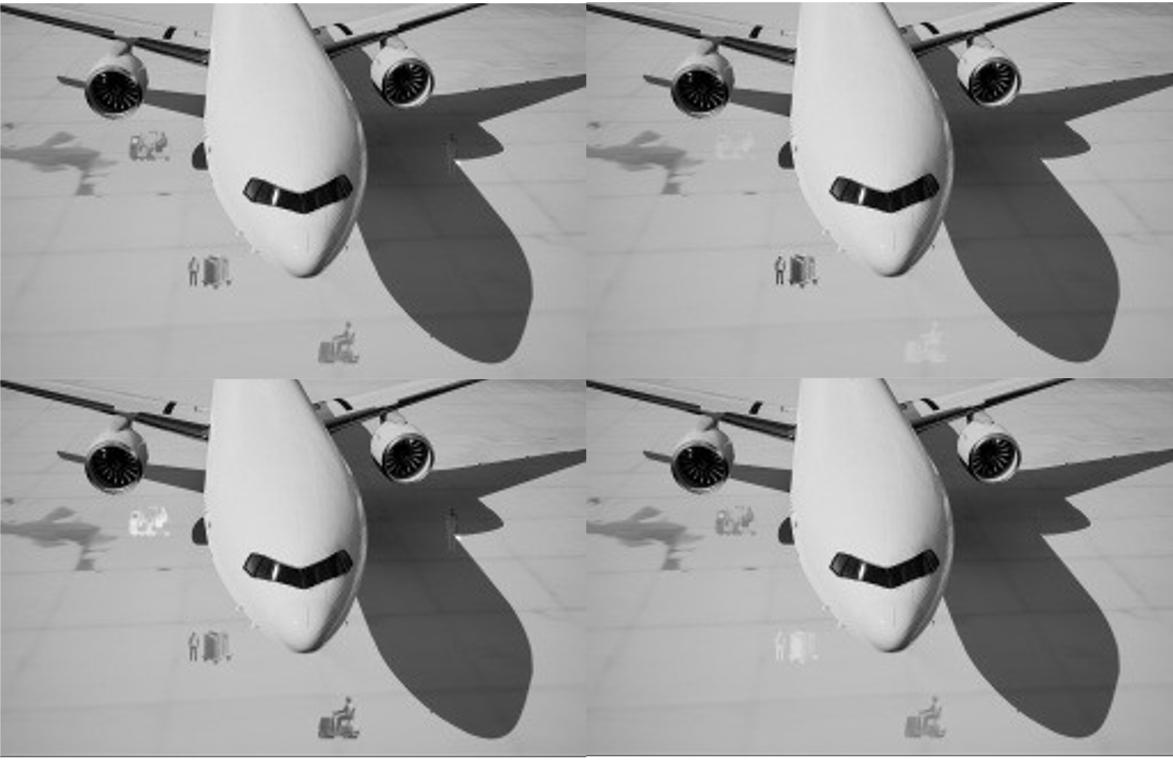}}
      \vspace{-10pt}
     \caption{(a) Various vehicles, personnel and objects considered as local perturbations. (b)-(c) overlapping images with perturbations highlighted within {\color{red}red} circles. (c) non-overlapping images derived from the same seed image. (d) Sampled images from the convex hull, composed of non-overlapping images as shown in~\ref{fig:non-overlapping_airplane}, that  undergo color normalization and display objects in varying shades of gray.}
    \label{fig:airplanes}
    \vspace{-10pt}
 \end{figure}

 \begin{figure}[!t]
    \centering
     \subfigure[Skewness]{
      \label{fig:skewness}
      \includegraphics[width=\linewidth,  trim={0cm 1.2cm 0cm 0cm}, clip]{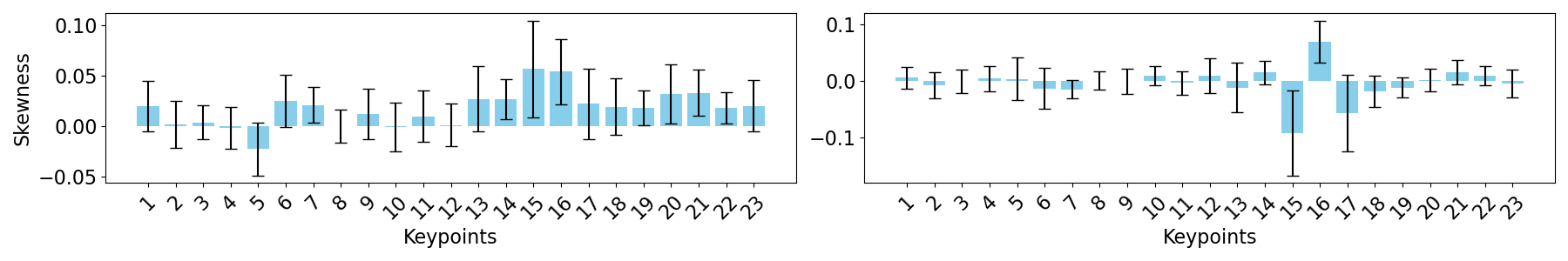}}
       \subfigure[Disparity between mean and median values]{
      \label{fig:mm}
      \includegraphics[width=\linewidth, trim={0cm 1.2cm 0cm 0cm}, clip]{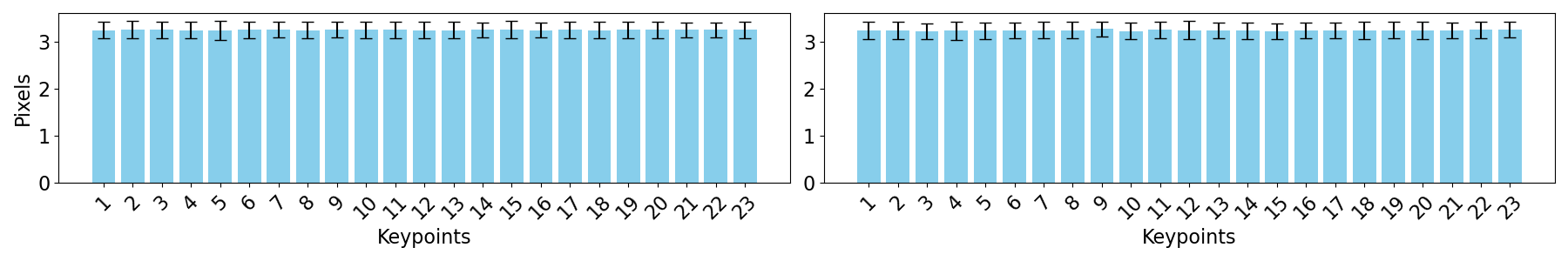}}
       \subfigure[Disparity between mean and mode values]{
      \label{fig:mh}
      \includegraphics[width=\linewidth, trim={0cm 1.2cm 0cm 0cm}, clip]{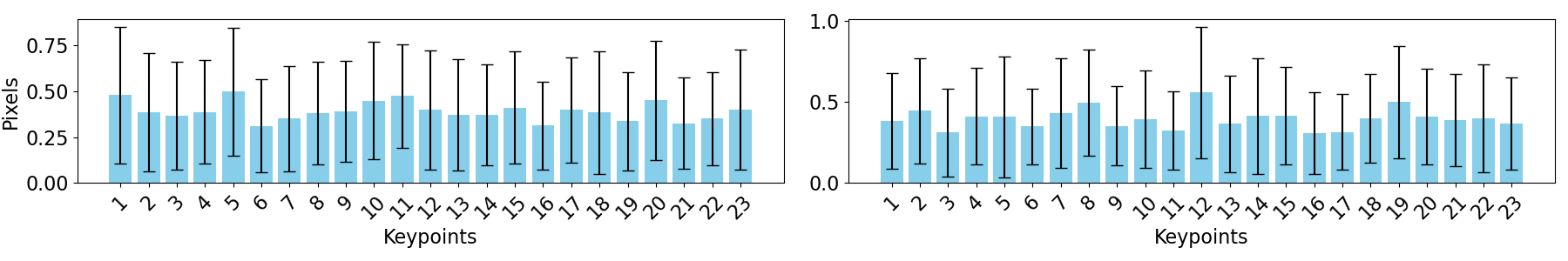}} 
       \subfigure[IQR]{
      \label{fig:iqr}
      \includegraphics[width=\linewidth, trim={0cm 1.2cm 0cm 0cm}, clip]{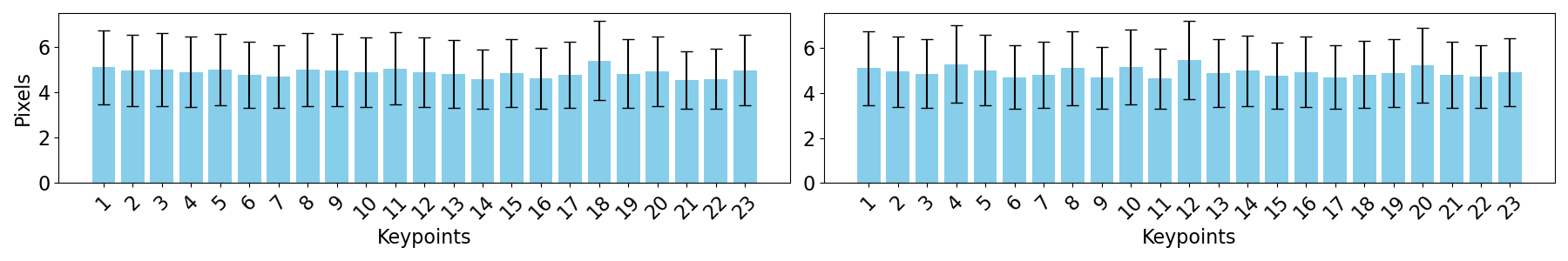}}       
    \caption{Skewness, the disparity between mean and median values, the disparity between mean and mode values, and the IQR for the normalized heatmaps generated by ResNet-18, with row (left) and column (right) dimensions depicted separately in each subfigure.}
    \label{fig:indicators}
 \end{figure}


\paragraph{2. Average pooling} Consider that the parameters for average pooling are tailored based on optimally allocated error thresholds that may vary across different keypoints, thereby necessitating distinct pooling parameters for each. To address this issue, the second component includes the division of a multi-channeled unnormalized heatmap layer into individual channels. Each of these channels is then processed through its own average pooling layer. This design allows for simultaneous verification of all keypoints, eliminating the need to verify each keypoint individually.

\paragraph{3. Flattening and concatenation.} The third component involves the flattening of the previously splitted channels, which are then concatenated into a single-dimensional format for verification.

\subsubsection{Perturbations}
 Out of 7000 images, we randomly sample 200 images as seed images, adding local or global perturbations to them.
\paragraph{Local object occlusions}
We created a set of 40 realistic semantic disturbances featuring personnel and vehicles typically encountered at an airport. These disturbances are depicted in Fig.~\ref{fig:asset}. To create perturbed images, we randomly selected 20 objects as patches, each up to 150 pixels in size, positioned randomly on the seed images, with each perturbed image having one patch. The perturbed images are classified into two groups: overlapping and non-overlapping, based on whether the patches overlap with the airplane in the image. There are 4000 perturbed images in total, including 893 overlapping and 3107 non-overlapping images. The convex hull's complexity is adjusted by changing the number of perturbed images \(m\), where \(m\) ranges from 2 to 4. These perturbed images are randomly selected, allowing us to systematically assess how the addition of semantic disturbances affects the robustness and performance of the system. Note that the convex hull is comprised solely of either overlapping or non-overlapping images. A collection of perturbed images and sampled images from the convex hull are presented in Fig.~\ref{fig:airplanes}.

\paragraph{Local block occlusions} Given a seed image, we generate 4 perturbed images. In each perturbed image, 
a $3 \times 3$ square is placed either away from the airplane or centered over a randomly selected keypoint to create non-overlapping or overlapping images, respectively. All pixel values within this square are randomly selected from the range [0, 255]. A convex hull is formed using these 5 images per seed image, of which all perturbed images are non-overlapping or overlapping ones.

\paragraph{Global perturbations}
We change each pixel's value through two types of global perturbations: brightness and contrast. For brightness, a variation value $b \in \mathbb{Z}$ is applied such that each pixel's value increases by $b$, that is, $I' = \texttt{clip}(I + b)$, where $I$ represents the original pixel values, $I'$ the new pixel values, and $\texttt{clip}$ ensures the values remain within the range [0, 255]. For contrast perturbation, a variation value $c \in \mathbb{R}$ adjusts each pixel's value by a percentage $c$, formulated as $I' = \texttt{clip}(I \times (1 + c))$.  The convex hull is constructed as follows. For a positive value $b \in \mathbb{Z}_+$, two perturbed images are created for $b$ and $-b$, respectively. These images act as vertices of the convex hull. Along with the seed image, this approach facilitates verification of the model's robustness to any brightness variation within the range $[-b, b]$. The same methodology applies to contrast variations $c \in \mathbb{R_+}$. We examine the effects for $b$ values of $\{1, 2\}$ and $c$ values of $5\times 10^{-4}, 5\times 10^{-3}, 1\times 10^{-2}\}$.



\subsection{Validity of Assumption~\ref{asmp:gaussian}}\label{sec:asmp}

{\color{\modifycolor}To assess the degree of symmetry and unimodality exhibited by the normalized heatmaps, we use four metrics: skewness, the disparity between mean and median, the disparity between mean and mode (peak), and the Interquartile Range (IQR), which is the difference between the third quartile (Q3) and the first quartile (Q1). A skewness near zero and a minimal difference between the mean and median indicate a symmetrical distribution. Combined with small skewness and close mean and median, a small IQR and close mean and mode suggest unimodality. These metrics are calculated individually for each image dimension. The evaluation uses the ResNet-18-based backbone model  across 3000 seed images and 4000 perturbed images with object occlusions and is presented in Fig.~\ref{fig:indicators}. The findings reveal skewness values approximately zero and mean and median values that are closely aligned, especially when considering the heatmap dimensions of $256 \times 256$. The difference between mean and mode and IQR values are also minor relative to the total range of 256. Thus, these statistical outcomes demonstrate that the normalized heatmaps predominantly feature unique peaks and exhibit axis symmetry, supporting Assumption~\ref{asmp:gaussian}.}


\begin{figure}[!t]
    \centering
      \includegraphics[width=\linewidth]{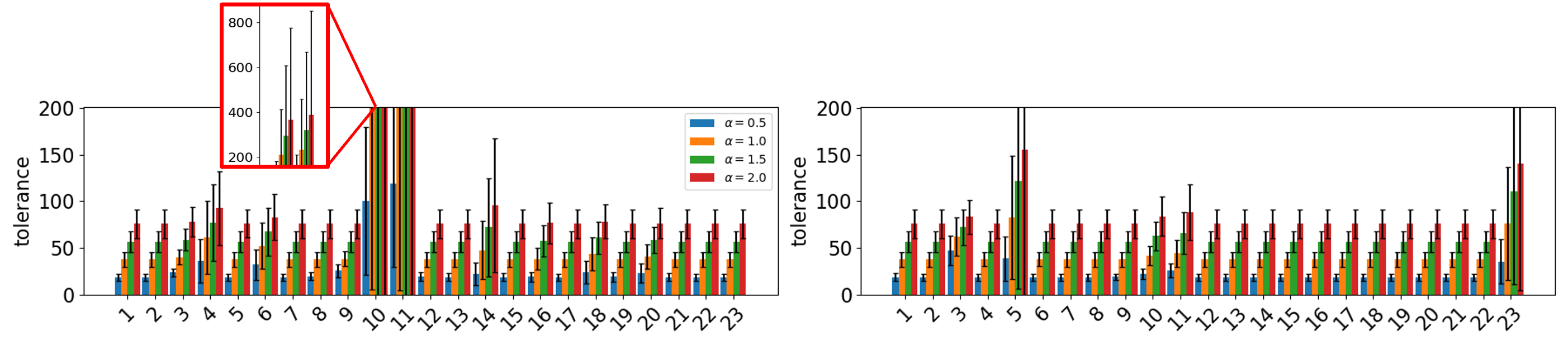}
     \caption{Results about the optimal error threshold allocation for horizontal (left) and vertical (right) dimensions.}
    \label{fig:deviation}
    \vspace{-10pt}
 \end{figure}

\subsection{Probabilistic Soundness and Completeness of Optimal Error Threshold Allocation}\label{sec:prob_property}
\subsubsection{Optimal tolerance allocation}
Note that the outcomes of allocating optimal tolerance are influenced by images and defined thresholds for acceptable pose errors. In our experiments, we manipulate these thresholds by incorporating a scaling factor $\alpha \in \mbR_+$, resulting in adjusted error thresholds $\alpha \bm{\epsilon}_r$ and $\alpha \bm{\epsilon}_t$.  The pose error thresholds are set as $\bm{\epsilon}_r = \alpha \cdot [10^\circ, 10^\circ, 10^\circ]$ and $\bm{\epsilon}_t = \alpha \cdot [4, 4, 20]$. The threshold factor $\alpha$ is varied across the values $\{0.5, 1.0, 1.5, 2.0\}$. We maintain the scaling factor $\kappa$ at 1.0 as specified in Eq.~\eqref{eq:formulation}, and assign the weights $w_1$ and $w_2$ values of 1 and 5, respectively. Our analysis includes 3000 images, each contributing 5000 sets of perturbed keypoints, accumulating a total of $1.5 \times 10^7$ samples. The mean and standard deviation of the tolerance allocated in the horizontal and vertical directions are presented in Fig.~\ref{fig:deviation}. As the allowable pose errors grows, the tolerance allocated per keypoint increases, predominantly uniform in both the horizontal and vertical axes, with the exception of the 10-th and 11-th keypoints, which are symmetrical with respect to the body axis of the plane (see Fig.~\ref{fig:airplane}), exhibit larger tolerances horizontally, while the 5-th and 23-th keypoints, aligned along the body axis, have larger tolerances vertically.

\subsubsection{Probabilistic soundness} In this part, we intend to evaluate the probabilistic soundness of optimal error threshold allocation by verifying if the pose estimation errors resulting from perturbed keypoints fall within the predefined error thresholds. One approach is to perform random sampling within the hyper-rectangle $\ccalH\ccalR (\delta \bbv^*)$. However, given the high dimensionality of $\ccalH\ccalR$, which is $2^{2K}$ with $K$ exceeding 20 in our scenario, this method demands an extraordinarily large sample size to achieve sufficient coverage. To address this issue, we only evaluate on vertices. For a seed image we randomly choose vertices from the hyper-rectangle $\ccalH\ccalR (\delta \bbv^*)$, which we then add to the coordinates of the ground-truth keypoints to create perturbed keypoints. Mathematically, in matrix form,
$
    \hat{\bbV} = \bbA \odot \delta \bbV^* + \bbV, 
$
where the elements of matrix $\bbA$ are randomly set to either -1 or 1, the symbol $\odot$ represents element-wise multiplication, and $\hat{\bbV}$ denotes the perturbed keypoints. This allows the perturbation of keypoints to reach the maximum tolerable errors. 

The simulation setup is largely the same as the previous section. We vary both the scaling factor $\kappa$ in Eq.~\eqref{eq:formulation} and the threshold factor $\alpha$, and compute the ratio of samples where the pose estimation errors remain within thresholds over the total number of samples.  We report the ratio of samples resulting in tolerable pose errors. The results are summarized in Table~\ref{tab:prob_soundness}. As we can see, only a small number of samples result in poses that surpass the error threshold. There is a noticeable trend where, vertically, as $\kappa$ increases, the ratio of acceptable samples rises due to the shrinking of the keypoint error threshold polytope. Horizontally, increasing $\alpha$ results in a lower ratio, as linearization becomes less accurate when the pose moves away from the reference point.
\renewcommand{\arraystretch}{1.2} 
\begin{table}[!t]
\setlength{\tabcolsep}{4pt} 
\centering\footnotesize
\begin{tabular}{c|cccc||cccc}
\bhline
\multirow{2}{*}{\diagbox{factor $\kappa$}{factor $\alpha$}} & \multicolumn{4}{c||}{soundness}  & \multicolumn{4}{c}{complenteness}  \\ 
& 0.5 & 1.0 & 1.5  & 2.0  & 0.5 & 1.0 & 1.5  & 2.0 \\
\hline
1.0 & 1.0 & 0.998978 & 0.996596 & 0.988339 & 1.93$\times 10^{-6}$ & 3.07$\times 10^{-5}$ & 4.85$\times 10^{-5}$ & 6.20$\times 10^{-5}$\\
1.5 & 1.0 & 0.999993 & 0.999887 & 0.999890 & $<$ 6.67 $\times 10^{-9}$  & 1.00$\times 10^{-6}$ & 2.00$\times 10^{-6}$ & 3.67$\times 10^{-6}$\\ 
2.0 & 1.0 & 1.0 & 1.0 & 0.999999 & $<$ 6.67 $\times 10^{-9}$  & $<$ 6.67 $\times 10^{-9}$  & 3.33$\times 10^{-7}$ & 6.67$\times 10^{-7}$\\ 
\bhline
\end{tabular}
\caption{Probabilistic soundness and completeness of optimal error threshold allocation. Ideal value is 1.0.}
\label{tab:prob_soundness}
\vspace{-20pt}
\end{table}

\begin{figure}[!t]
    \centering
     \subfigure[Keypoint 1.]{
      \label{fig:kp1}
      \includegraphics[width=0.3\linewidth, trim={2cm 0cm 2cm 0cm}, clip]{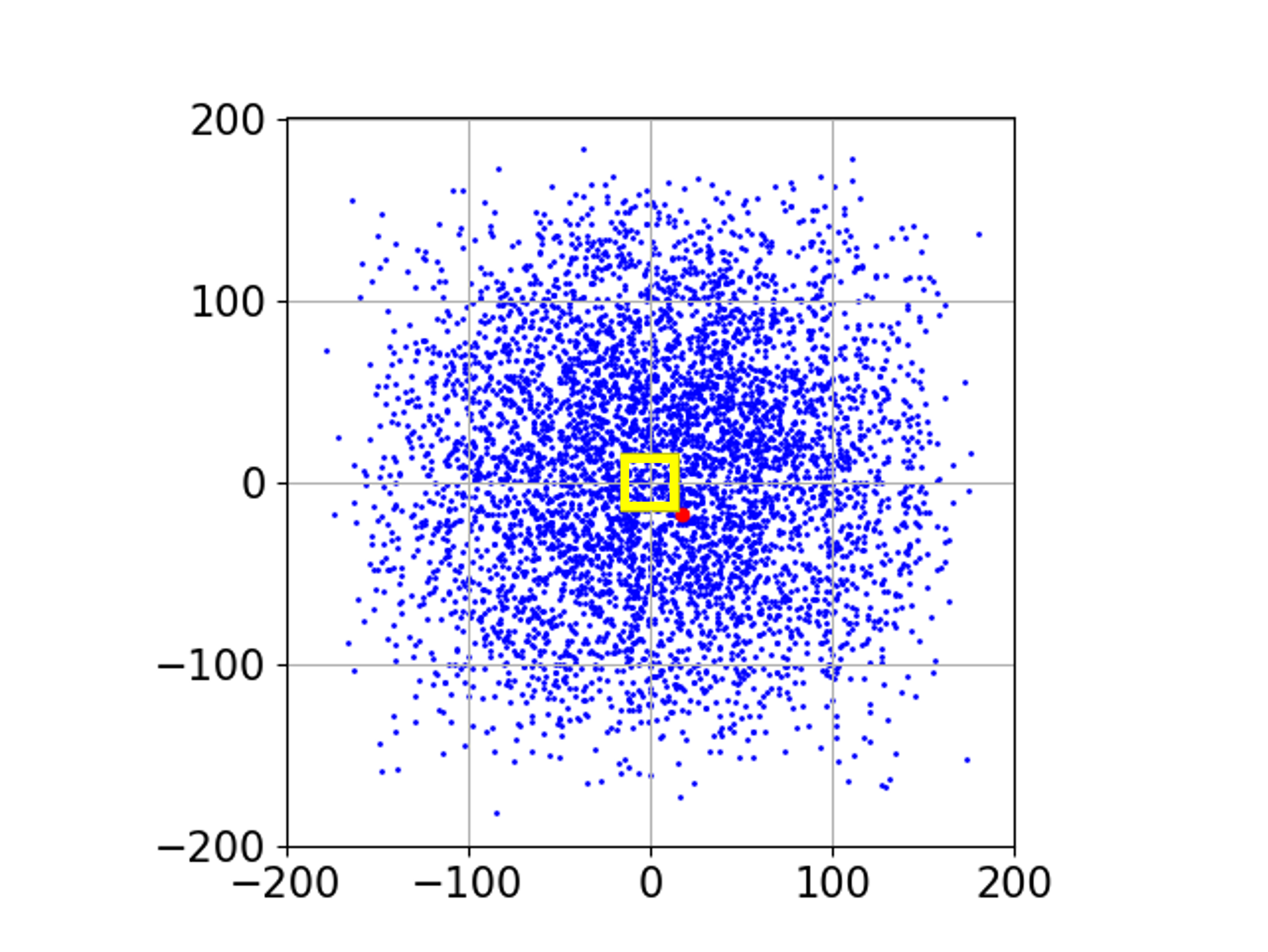}}
     \subfigure[Keypoint 3.]{
      \label{fig:kp3}
      \includegraphics[width=0.3\linewidth, trim={2cm 0cm 2cm 0cm}, clip]{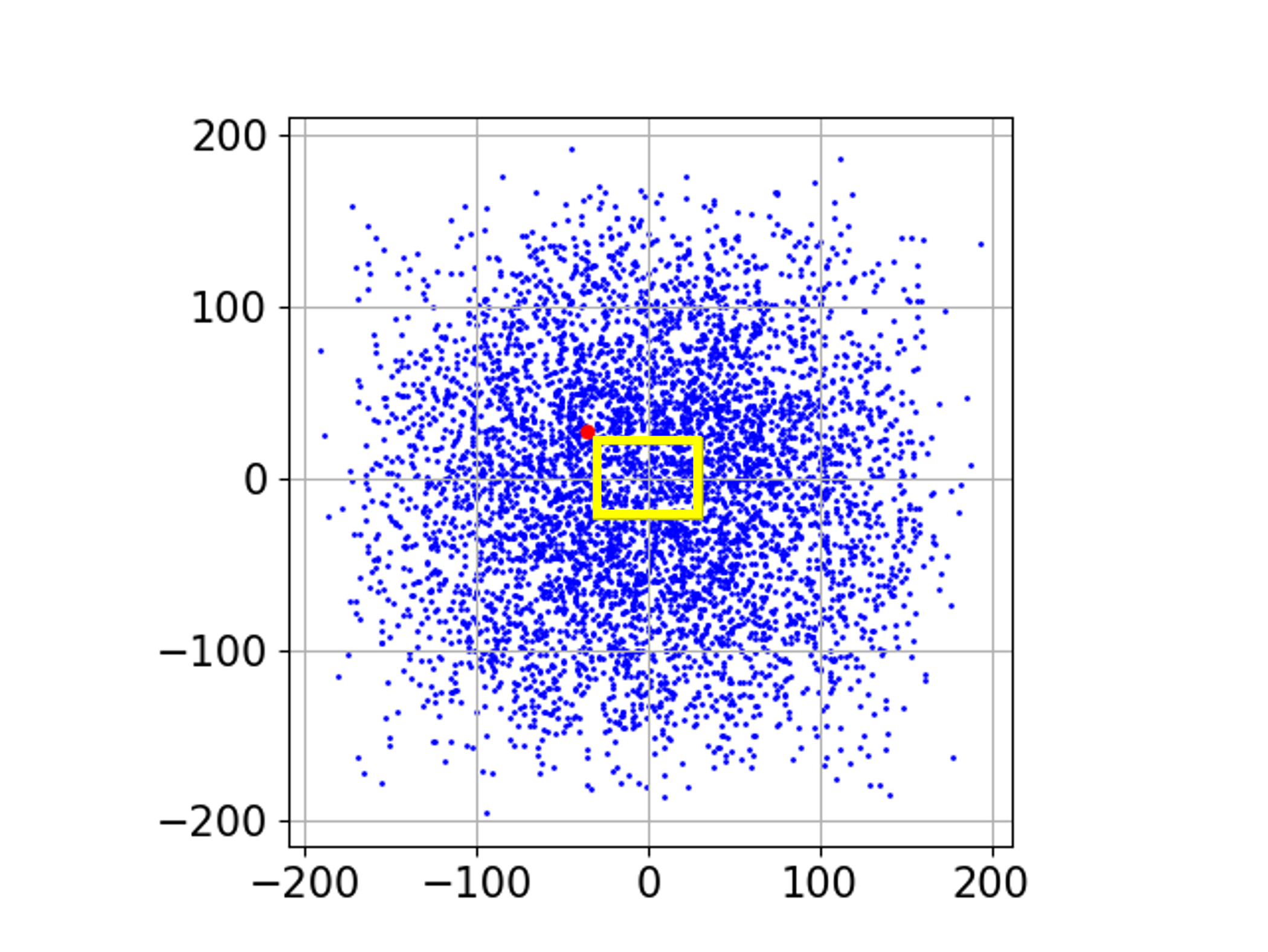}}
     \subfigure[Keypoint 5.]{
      \label{fig:kp5}
      \includegraphics[width=0.3\linewidth, trim={2cm 0cm 2cm 0cm}, clip]{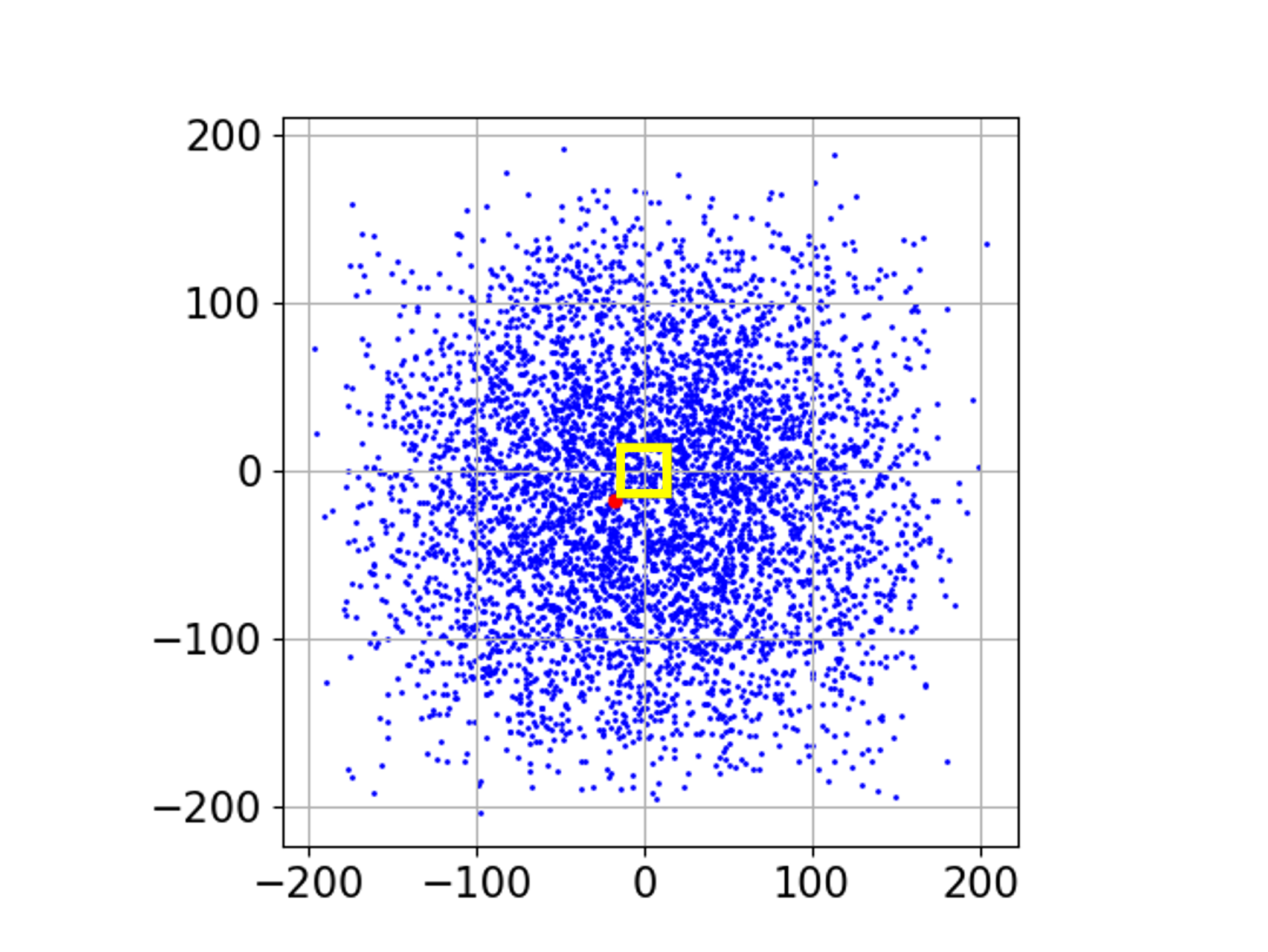}}
     \caption{Unscaled reprojection errors ({\color{blue}blue}) from 5000 samples and the optimally allocated error thresholds ({\color{yellow}yellow} square) for keypoints 1, 3 and 5. {\color{\modifycolor} The {\color{red}red} points denote a keypoint error allocation that collectively results in the violation of pose error bounds, obtained by deviating outward from the yellow boundaries.}}
    \label{fig:kps}
 \end{figure}
\subsubsection{Probabilistic completeness} 
To compute completeness, we randomly sample from the set of tolerable pose errors $\Xi$, use the perspective projection model~\eqref{eq:projection} to determine keypoints, and then compute the reprojection error relative to the ground truth. We verify whether these reprojection errors remain within the permissible error bounds on keypoints, represented by the hyper-rectangle $\ccalH\ccalR$. This simulation follows the same setup as used for probabilistic soundness. The results are presented in Table~\ref{tab:prob_soundness}. Vertically, as $\kappa$ increases, completeness diminishes due to the reduction in the size of $\ccalH\ccalR$, leading to more samples exceeding these bounds. On the other hand, there is no discernible trend when viewed horizontally.

When examining Table~\ref{tab:prob_soundness}, a trade-off between soundness and completeness is evident. Our framework exhibits better soundness, implying that the set of allocated keypoint errors, $\ccalH\ccalR$, is relatively small compared to the ground-truth set $\delta \bbV_\Xi$. This smaller size of $\ccalH\ccalR$ accounts for the reduced completeness observed in the results. To support this, Fig.~\ref{fig:kps} illustrates the reprojection errors relative to the optimally allocated bounds (rectangle) for keypoints 1, 3 and 5 from a specific image under the conditions $\alpha = 0.5$ and $\kappa = 1.0$. These reprojection errors are calculated by randomly generating pose errors within tolerable ranges and calculating the difference between ground-truth keypoints and those obtained from perspective projection model~\eqref{eq:projection}. The illustration shows that a significant number of samples exceed the computed bounds, contributing to the overall low completeness observed across all keypoints. Therefore, if the verification tool returns a hold, it implies that the pose estimation is robust with high probability. On the other hand, if the tool returns a violation, we cannot draw any definitive conclusions. {\color{\modifycolor}The conservativeness in error allocation mainly stems from the need to decouple the dependencies among keypoints.}


\subsection{Verification of Local and Global Perturbations}\label{sec:verif_res}
In this section, we assess the robustness of the pose estimation method when subjected to various levels of perturbations. We aim to answer three key questions:
\begin{enumerate}
    \item How computationally efficient is the resulting neural network verification problem?
    \item How accurate is the proposed certification method for robust pose estimation?
\end{enumerate}

\subsubsection{Metric}

We measure the performance using verification times and verified rates. Verified rate is defined as the proportion of cases where the verification algorithm confirms robustness against those where seed images produce acceptable pose estimation errors.   {\color{\modifycolor}Another critical measure is verification accuracy which is defined as the proportion of cases where the verification algorithm confirms robustness against those that are indeed robust. However, determining the exact number of truly robust instances is impractical. Consequently, the verified rate serves as a lower bound of verification accuracy as the instances with acceptable pose estimation errors from seed images exceed those that are truly robust}.

\subsubsection{Verification results}

{\color{\modifycolor} We employ the verification toolbox \texttt{ModelVerification.jl} (\texttt{MV})~\cite{wei2024modelverification}, which accepts convex hulls as input specifications.~\mv is the state-of-the-art verifier that supports a wide range of verification algorithms and is the most user-friendly to extend. It follows a branch-and-bound strategy to divide and conquer the problem efficiently. Two parameters guide this process: \texttt{split\_method} determines the division of an unknown branch into smaller branches for further refinement, and \texttt{search\_method} dictates the approach to navigating through the branch. We set \texttt{search\_method} to use breadth-first search and  \texttt{split\_method} to bisect the branch. The computing platform is a Linux server equipped with an Intel CPU with 48 cores running at 2.20GHz and 376GB of total memory, approximately 150GB of which is available owing to multiple users. Additionally, the server includes 4 NVIDIA RTX A4000 GPUs, each with 16GB of memory.

\renewcommand{\arraystretch}{1.2} 
 \begin{table}[!t]
     \centering\footnotesize
      {\color{\modifycolor}
     \begin{tabular}{c||cc|cc||cc|cc}
     \bhline
      \multirow{3}{*}{$m$} & \multicolumn{4}{c||}{non-overlapping images}  &  \multicolumn{4}{c}{overlapping images} \\
     \cline{2-9}
      & \multicolumn{2}{c|}{$\kappa = 1.0$} &  \multicolumn{2}{c||}{$\kappa = 1.5$} & \multicolumn{2}{c|}{$\kappa = 1.0$} & \multicolumn{2}{c}{$\kappa = 1.5$} \\
     \cline{2-9}
         & $\alpha = 1.0$ & $\alpha = 1.5$  & $\alpha = 1.0$ & $\alpha = 1.5$ & $\alpha = 1.0$ & $\alpha = 1.5$  & $\alpha = 1.0$ & $\alpha = 1.5$\\
     \hline
       2  & 38.3$\pm$56.7 &  \cellcolor{gray!20}  {\bf 11.9}$\pm$11.1 & 81.5$\pm$79.7 & 47.1$\pm$69.0 & 82.1$\pm$128.3 & 31.1$\pm$29.4 & 133.1$\pm$127.3 & 72.4$\pm$79.6 \\
        3  & 63.3$\pm$89.2 & \cellcolor{gray!20}  {\bf 16.9}$\pm$12.3 & 104.9$\pm$188.3 & 60.7$\pm$74.0 & 87.9$\pm$87.4 & 47.7$\pm$44.7 & 154.8$\pm$127.4 & 98.3$\pm$96.2 \\
        4  & 73.1$\pm$103.8 & \cellcolor{gray!20}  {\bf 23.2}$\pm$16.5 & 127.7$\pm$105.4  & 79.1$\pm$101.0 & 109.2$\pm$101.8 & 60.4$\pm$36.7 & 184.3$\pm$132.5 & 110.1$\pm$87.7\\
     \bhline
     \end{tabular}
     }
     \caption{Statistical results on verification time for local perturbations (seconds).}
     \label{tab:time_local_perturb}
     \vspace{-10pt}
 \end{table}

\renewcommand{\arraystretch}{1.2} 
 \begin{table}[!t]
     \centering\footnotesize
      {\color{\modifycolor}
     \begin{tabular}{c||cc|cc||cc|cc}
     \bhline
      \multirow{3}{*}{$m$} & \multicolumn{4}{c||}{ non-overlapping images}  &  \multicolumn{4}{c}{overlapping images} \\
     \cline{2-9}
       & \multicolumn{2}{c|}{$\kappa = 1.0$} &  \multicolumn{2}{c||}{$\kappa = 1.5$} & \multicolumn{2}{c|}{$\kappa = 1.0$} & \multicolumn{2}{c}{$\kappa = 1.5$} \\
     \cline{2-9}
         & $\alpha = 1.0$ & $\alpha = 1.5$  & $\alpha = 1.0$ & $\alpha = 1.5$ & $\alpha = 1.0$ & $\alpha = 1.5$  & $\alpha = 1.0$ & $\alpha = 1.5$\\
     \hline
        2  & 55.0\% & \cellcolor{gray!20}  {\bf 88.5}\% & 16.9\%  & 55.6\% & 42.2\%  &  74.1\% & 13.1\% & 44.8\%  \\
        3  & 55.8\% & \cellcolor{gray!20}  {\bf 88.5}\% & 16.6\%  & 55.3\% & 36.7\% & 65.1\% & 12.6\%  & 38.9\% \\
        4  & 55.3\% & \cellcolor{gray!20}  {\bf 85.5}\% & 16.7\%  & 53.1\% & 28.0\% & 50.4\% & 10.6\% & 29.4\% \\
     \bhline
     \end{tabular}
     }
     \caption{Statistical results on verified rate for local perturbations.}
     \label{tab:rate_local_perturb}
     \vspace{-10pt}
 \end{table}

\paragraph{Local object occlusions for the CNN-based model} The results presented in Tab.~\ref{tab:time_local_perturb} indicate that for non-overlapping images, verification time increases as the number $m$ of perturbed images forming the convex hull rises. As the error bounds for pose estimation expand, with threshold factor $\alpha$ increasing from 1.0 to 1.5, the time decreases. A similar effect is observed when the scaling factor $\kappa$ decreases from 1.5 to 1.0. This reduction occurs because the optimally allocated error thresholds on keypoints expand with increasing $\alpha$ and decreasing $\kappa$, as illustrated in Fig.~\ref{fig:deviation}, which results in fewer nodes. A similar trend is observed for overlapping images. We emphasize that the verification process for overlapping images requires more time than for non-overlapping ones, given the same number of perturbed images and identical pose estimation error bounds, indicating that the keypoint detection model is effective in suppressing disturbances external to the airplane. 

In reference to the verified rates displayed in Tab.~\ref{tab:rate_local_perturb}, for non-overlapping images, the rate remains stable regardless of the number of perturbed images, given the same $\kappa$ and $\alpha$. Conversely, there is an increase in the verified rate with a decrease in $\kappa$ and an increase in $\alpha$, as larger allocated keypoint error thresholds or larger allowable pose error thresholds  result in more images being verified as robust. In the case of overlapping images, a notable trend is the decline in the verified rate when the number of perturbed images increases. This is because a rise in the number of perturbed images means more objects are overlaid on the airplane, which compromises the accuracy of predictions and, in turn, decreases the number of images verified as robust.

\paragraph{Global perturbations for the CNN-based model} A similar trend to that seen with local perturbations emerges, as indicated in Tables~\ref{tab:time_global_perturb} and~\ref{tab:rate_global_perturb}. The verification can handle contrast variations $c$ of 1\% and brightness variations $b$ of 2/255. Greater variations in contrast and brightness lead to longer verification times, whereas a larger $\alpha$ reduces verification time and increases the verified rate. }

\paragraph{Local block occlusions for the ResNet-18-based model} We set $\kappa = 1.0$ and $\alpha = 1.5$. For the convex hull comprised of non-overlapping images, the verification time is 314.4$\pm$227.0 in seconds, with a verification rate of 93.5\%. Conversely, for the convex hull consisting of overlapping images, the verification time significantly escalates to 1571.7$\pm$1213.5 seconds, and the verification rate is 94.0\%.

\begin{table}[!t]
\setlength{\tabcolsep}{3.5pt} 
     \centering\footnotesize
      {\color{\modifycolor}
     \begin{tabular}{c||cc|cc||c||cc|cc}
     \bhline
       \multirow{2}{*}{$c$} &  \multicolumn{2}{c|}{$\kappa = 1.0$} &  \multicolumn{2}{c||}{$\kappa = 1.5$}  &   \multirow{2}{*}{$b$}  & \multicolumn{2}{c|}{$\kappa = 1.0$} & \multicolumn{2}{c}{$\kappa = 1.5$} \\
     \cline{2-5}
     \cline{7-10}
       & $\alpha = 1.0$ & $\alpha = 1.5$ & $\alpha = 1.0$ & $\alpha = 1.5$ &  & $\alpha = 1.0$ & $\alpha = 1.5$ & $\alpha = 1.0$ & $\alpha = 1.5$ \\
     \hline
        0.05\%  & 79.7$\pm$105.5 & \cellcolor{gray!20}  {\bf 15.4}$\pm$4.9 & 133.9$\pm$95.5 & 83.2$\pm$105.0  & 1 & 135.0$\pm$171.5 & \cellcolor{gray!20}  {\bf 32.7}$\pm$12.4 & 236.4$\pm$164.0  &  144.7$\pm$164.5\\
       0.5\%  & 88.6$\pm$112.3  & \cellcolor{gray!20}  {\bf 19.6}$\pm$5.7 & 159.0$\pm$111.1 & 92.0$\pm$108.1  & 2 & 223.4$\pm$301.9  & \cellcolor{gray!20}  {\bf 64.6}$\pm$22.5 & 373.5$\pm$309.2 & 225.1$\pm$266.0 \\
       1\%  & 176.3$\pm$232.9 & \cellcolor{gray!20}  {\bf 48.7}$\pm$17.7 & 304.1$\pm$243.6 & 200.0$\pm$249.4 & --  & -- & -- & --  & -- \\
    \bhline
     \end{tabular}
     }
     \caption{Statistical results on verification time for global perturbations (seconds).}
     \label{tab:time_global_perturb}
     \vspace{-10pt}
 \end{table}



 \begin{table}[!t]
     \centering\footnotesize
      {\color{\modifycolor}
      \begin{tabular}{c||cc|cc||c||cc|cc}
     \bhline
  \multirow{2}{*}{$c$} &  \multicolumn{2}{c|}{$\kappa = 1.0$} &  \multicolumn{2}{c||}{$\kappa = 1.5$}  &   \multirow{2}{*}{$b$}  & \multicolumn{2}{c|}{$\kappa = 1.0$} & \multicolumn{2}{c}{$\kappa = 1.5$} \\
     \cline{2-5}
     \cline{7-10}
       & $\alpha = 1.0$ & $\alpha = 1.5$ & $\alpha = 1.0$ & $\alpha = 1.5$ &  & $\alpha = 1.0$ & $\alpha = 1.5$ & $\alpha = 1.0$ & $\alpha = 1.5$ \\
     \hline
        $5\times10^{-4}$  & 57.0\% & \cellcolor{gray!20}  {\bf 91.5}\% & 17.3\%  & 57.1\% & 1 & 57.0\% & \cellcolor{gray!20} {\bf 91.5\%} & 15.0\% &  57.7\%\\
        $5\times10^{-3}$  & 56.5\% & \cellcolor{gray!20}  {\bf 90.5}\% & 16.8\% & 56.7\% & 2 & 56.0\% & \cellcolor{gray!20} {\bf 88.5\%} & 14.8\% &  57.6\%\\
        $1\times10^{-2}$  & 56.0\% & \cellcolor{gray!20}  {\bf 87.4}\% & 17.3\%  & 56.8\% & -- & & & --  & -- \\
    \bhline
     \end{tabular}
     }
     \caption{ Statistical results on verified rate for global perturbations.}
     \label{tab:rate_global_perturb}
     \vspace{-10pt}
 \end{table}

\section{Conclusions}
{\color{\modifycolor}In this study, we introduce a framework designed to certify the robustness of learning-based keypoint detection and pose estimation methods. Given system-level requirements, our approach transforms the certification of PnP-based pose estimation into the standard verification for classification, allowing us to leverage off-the-shelf tools. The evaluation results demonstrated that our framework can handle realistic semantic perturbations compared to existing methods. We emphasize that our certification framework is general for safety-critical applications that depend on accurate keypoint detection. These include airport runway detection for automatic landing, pedestrain pose estimation for autonomous driving, and anatomical landmark identification for robot-assisted surgery. Future directions of this framework could include: 1)	Expanding the input specifications to represent more perturbations, such as the translational movement of objects. 2) Reducing the conservativeness caused by independent error allocation among keypoints.}

\section*{Acknowledgement}
This material is based upon work supported by The Boeing Company. Any opinions, findings, and conclusions or recommendations expressed in this material are those of the authors and do not necessarily reflect the views of The Boeing Company.

\bibliographystyle{ACM-Reference-Format}
\bibliography{lit}

\end{document}